\theoremstyle:=definition,remark,plain\do{%
        \expandafter\g@addto@macro\csname th@\theoremstyle\endcsname{%
            \addtolength\thm@preskip\parskip
            }%
        }
\newtheorem*{rep@theorem}{\rep@title}
\newcommand{\newreptheorem}[2]{%
\newenvironment{rep#1}[1]{%
 \def\rep@title{#2 \ref{##1}}%
 \begin{rep@theorem}}%
 {\end{rep@theorem}}}
\DeclareMathOperator*{\argmax}{arg\,max}
\newtheorem{theorem}{Theorem}[section]
\newtheorem{lemma}{Lemma}[section]
\newtheorem{corollary}{Corollary}[section]
\theoremstyle{definition}
\newtheorem{definition}{Definition}[section]
\newtheorem{example}{Example}[section]
\newcommand{\X}[0]{\mathcal{X}}
\newcommand{\Y}[0]{\mathcal{Y}}
\newcommand{\R}[0]{\mathbb{R}}
\newcommand{\iutil}[0]{\mathcal{U}_\Delta}
\newcommand{\iutiln}[0]{\mathcal{U}}
\newcommand{\sutil}[0]{\mathcal{S}_{+}}
\newcommand{\br}[0]{\Delta}
\newcommand{\x}[0]{x}
\newcommand{\ex}[0]{\mathbb{E}}
\newcommand{\ind}[1]{\mathbb{I}\{#1\}}
\newcommand{\prob}[1]{\mathbb{P}(#1)}
\newcommand{\accn}[0]{\mathcal{A}}
\newcommand{\acc}[0]{\mathcal{A}_{\Delta}}
\newcommand{\sgap}[0]{\mathcal{G}}
\newcommand{\acost}[0]{c_a}
\newcommand{\bcost}[0]{c_b}
\newcommand{\scost}[0]{\mathcal{B}_{+}}
\newcommand{\scostg}[1]{\mathcal{B}_{+, #1}}
\newcommand{\ol}[0]{\ell}
\newcommand{\old}[1]{\ell_{\Delta_#1}}
\newcommand{\olcost}[0]{c_L}
\newcommand{\olrv}[1]{L_{+, #1}}
\newcommand{\olcdf}[1]{F_{+, #1}}
\newcommand{\leibd}[0]{\mathop{}\!\mathrm{d}}
\newcommand{\accs}[1]{\mathcal{A}_{#1}}
\newcommand{\opt}[0]{^{*}}
\newcommand{\todo}[1]{}
\renewcommand{\todo}[1]{{\color{red} J: {#1}}}
\begin{document}
	\title{The Social Cost of Strategic Classification}
	\author{Smitha Milli}
	\email{smilli@berkeley.edu}
	\affiliation{\institution{University of California, Berkeley}}
	\author{John Miller}
	\email{miller\_john@berkeley.edu}
	\affiliation{\institution{University of California, Berkeley}}
	\author{Anca D. Dragan}
	\email{anca@berkeley.edu}
	\affiliation{\institution{University of California, Berkeley}}
	\author{Moritz Hardt}
	\email{hardt@berkeley.edu}
	\affiliation{\institution{University of California, Berkeley}}
	\begin{abstract}
Consequential decision-making typically incentivizes individuals to behave
strategically, tailoring their behavior to the specifics of the decision rule.
A long line of work has therefore sought to counteract strategic behavior by
designing more conservative decision boundaries in an effort to increase
robustness to the effects of strategic covariate shift.

We show that these efforts benefit the institutional decision maker at the
expense of the individuals being classified. Introducing a notion of social
burden, we prove that any increase in institutional utility necessarily leads to
a corresponding increase in social burden. Moreover, we show that the negative
externalities of strategic classification can disproportionately harm
disadvantaged groups in the population.

Our results highlight that strategy-robustness must be weighed against
considerations of social welfare and fairness.
\end{abstract}

	\keywords{Strategic classification, fairness, machine learning}
	 
	 \copyrightyear{2019} 
\acmYear{2019} 
\setcopyright{acmcopyright}
\acmConference[FAT* '19]{FAT* '19: Conference on Fairness, Accountability, and
Transparency}{January 29--31, 2019}{Atlanta, GA, USA}
\acmBooktitle{FAT* '19: Conference on Fairness, Accountability, and
Transparency (FAT* '19), January 29--31, 2019, Atlanta, GA, USA}
\acmPrice{15.00}
\acmDOI{10.1145/3287560.3287576}
\acmISBN{978-1-4503-6125-5/19/01}

\begin{CCSXML}
<ccs2012>
<concept>
<concept_id>10010147.10010257</concept_id>
<concept_desc>Computing methodologies~Machine learning</concept_desc>
<concept_significance>500</concept_significance>
</concept>
<concept>
<concept_id>10010147.10010257.10010293.10010318</concept_id>
<concept_desc>Computing methodologies~Stochastic games</concept_desc>
<concept_significance>300</concept_significance>
</concept>
</ccs2012>
\end{CCSXML}
\ccsdesc[500]{Computing methodologies~Machine learning}
\ccsdesc[300]{Computing methodologies~Stochastic games}
	
	\maketitle

	\section{Introduction}
\label{sec:intro}


As machine learning increasingly supports consequential decision making, its
vulnerability to manipulation and gaming is of growing concern. When individuals
learn to adapt their behavior to the specifics of a statistical decision rule,
its original predictive power will deteriorate. This widely observed
empirical phenomenon, known as Campbell's Law or Goodhart's Law, is often
summarized as: ``Once a measure becomes a target, it ceases to be a good
measure''~\cite{strathern1997improving}.

Institutions using machine learning to make high-stakes decisions naturally wish
to make their classifiers robust to strategic behavior.  A growing line of work
has sought algorithms that achieve higher utility for the institution in
settings where we anticipate a strategic response from the the classified
individuals~\citep{dalvi2004adversarial,bruckner2012static,hardt2016strategic}.
Broadly speaking, the resulting solution concepts correspond to more
conservative decision boundaries that increase robustness to some form of
distributional shift.

But there is a flip side to strategic classification. As insitutional utility
increases as a result of more cautious decision rules, honest individuals worthy
of a positive classification outcome may face a higher bar for success.

The costs incurred by individuals as a consequence of strategic classification
are by no means hypothetical, as the example of lending shows. In the United
States, credit scores are widely deployed to allocate credit. However, even
creditworthy individuals routinely engage in artificial practices intended to
improve their credit scores, such as opening up a certain number of credit
lines in a certain time period~\cite{citron2014scored}.
%
%

In this work, we study the tension between accuracy to the institution and 
impact to the individuals being classified. We first introduce a general measure of the cost of strategic classification,
which we call the \emph{social burden}. Informally, the social burden measures
the expected cost that a positive individual needs to incur to be correctly
classified correctly.

For a broad class of cost functions, we prove there exists an intrinsic
trade-off between institutional accuracy and social burden: any increase in
institutional accuracy comes at an increase in social burden.  Moreover, we
precisely characterize this trade-off and show the commonly considered
Stackelberg equilibrium solution achieves maximal institutional accuracy
at the expense of maximal social burden.

Equipped with this generic trade-off result, we turn towards a more careful
study of how the social burden of strategic classification impacts different
subpopulations. We find that the social burden can fall disproportionally on
disadvantaged subpopulations, under two different notions by which one group can
be \emph{disadvantaged} relative to another group.
%
%
Furthermore, we show that as the institution improves its accuracy, it
exacerbates the gap between the burden to an advantaged and disadvantaged group.
Finally, we illustrate these conditions and their consequences with a case study
on FICO data.
%

\subsection{Our Contributions}
In this paper, we make the following contributions:
\begin{enumerate}
\item We prove a general result demonstrating the trade-off between
institutional accuracy and individual utility in the strategic setting. Our
theoretical characterization is supplemented with examples illustrating when an
institution would prefer to operate along different points in this trade-off
curve.
\item We show fairness considerations inevitably arise in the strategic setting.
When individuals incur cost as a consequence of making a classifier robust to
strategic behavior, we show the costs can disproportionally fall by
disadvantaged subpopulations. Furthermore, as the institution increases its
robustness, it also increases the
disparity between subpopulations.
\item Using FICO credit data as a case-study, we empirically validate our
modeling assumptions and illustrate both the general trade-offs and fairness
concerns involved with strategic classification in a concrete setting.
\end{enumerate}
Reflecting on our results, we argue that the existing view of strategic
classification has been \emph{instituition-centric}, ignoring the social burden
resulting from improved institutional utility. Our framework makes it possible
to select context-specific trade-offs between institutional and individual
utility, leading to a richer space of solutions.

Another key insight is that discussions of strategy-robustness must go hand in
hand with considerations of fairness and the real possibility that
robustness-promoting mechanisms can have disparate impact in different segments
of the population.

	\section{Model}\label{sec:model}
\paragraph{\textbf{Strategic classification.}}
Throughout this work, we consider the binary classification setting.  Each
individual has features $x \in \X$ and a label $y \in \Y = \{0, 1\}$.  The
institution publishes a classifier $f : \X \rightarrow \Y$.  In the \emph{non-strategic setting}, the institution maximizes the \emph{non-strategic utility}, which is simply the
classification accuracy of $f$:
\begin{align*}
    & \iutiln(f) = \prob{f(x) = y} \,.
\end{align*}

In the \emph{strategic setting}, the individual can modify their
features, and the institution aims to preempt the individual's strategic
manipulation. In response to the institution's classifier $f$, the individual can
change her features $x$ to new features $x'$.  However, modification incurs a
cost given by $c : \X \times \X \rightarrow \R_{\geq 0}$.  The individual then
receives an \emph{individual utility} $u_x(f, x') = f(x')- c(x, x')$, which
trades off between the cost of manipulation $c(x, x')$ and the benefits of
classification $f(x')$.

The institution models the individual $x$ as maximizing their utility $u_x(f,
x')$ and acting according to  the \emph{best-response} to the classifier $f$:
\begin{align*}
    \Delta(\x; f) = \argmax_{\x'} u_x(f, x').
\end{align*}
When it is clear from context we will drop the dependence on $f$ and write the
individual's best response as $\br(x)$. Although $\br(x)$ may not have a unique
maximizer, it is assumed that the individual $x$ does not adapt her features if
she is already accepted by the classifier,~i.e.~$f(x) = 1$, or if there is no
maximizer $x'$ she can move to such that $f(x') = 1$. In cases where the
individual does adapt, let $x'$ be an arbitrary maximizer such that $f(x') = 1$.
In practice, it is unlikely individuals actually play best-response solutions,
and we will discuss as appropriate the impact of deviations from best-response
play.
%


Given this model, the institution aims to maximize the 
\emph{strategic
utility}, which
measures accuracy \emph{after} individual responses:
\begin{align*}
    \iutil(f) = \prob{f(\br(\x)) = y}.
\end{align*}

For example, imagine that the institution is trying to rank pages on a social
network. Although the number of likes a page has may be predictive, it is also
an easy feature to game.  Therefore, models with high strategic utility will
assign low weight to this feature, even if it is useful in the static setting. Henceforth, we will refer to the strategic utility as simply the \emph{institutional utility}.

\paragraph{\textbf{Social burden.}} Focusing purely on maximizing $\iutil$, as
done in prior work, ignores the cost a classifier imposes on individuals 
\cite{bruckner2011stackelberg, hardt2016equality,
dong2018strategic}. To account for these costs, we define the \emph{individual
burden} of a classifier $f$ as the minimum cost an individual needs to incur in
order to be classified positively: $b_f(x) = \min_{f(x') = 1} c(x, x')$.

For positive individuals with $y=1$, a high individual burden means the individual
has to incur great cost to obtain the correct classification.  To quantity this
cost, we introduce the \emph{social burden}, defined as the expected individual
burden of positive individuals.
\begin{definition}[Social burden] 
    The social burden of a classifier $f$ is defined as $\scost(f) = \ex \left
[b_f(x) \mid y = 1 \right]$.
\end{definition}

The social burden measures the expected cost that positive individuals would
need
to incur to be classified positively, regardless of whether the best response
$\Delta(x)$
indicates that they should adapt.
One
could
imagine
other ways of
measuring
the impact on individuals, such as the expected utility of positive
individuals,
$\ex
\left
[
u_x(f, \Delta(x)) \mid y = 1
\right ]$, or the corresponding measure over all individuals,
rather
than
only
positive
individuals.
Most
of
our
results
still
hold
for
these alternative
measures, and we relegate discussion about the choice of social burden to
Section
\ref{sec:disc}.



\paragraph{\textbf{Assumptions on cost function.}} 
While there are many possible models for the cost function, we restrict our
attention to a natural set of cost functions that we call \textit{outcome
monotonic}. Outcome monotonic costs capture two intuitive properties: (1)
Monotonically improving one's outcome requires monotonically increasing amounts
of work, and (2) it is zero cost to worsen one's outcome. This captures the
intuition that, for example, it is harder to pay back loans than it is to go
bankrupt.

\begin{definition}[Outcome likelihood]
	The outcome likelihood of an individual $x$ is $\ol(x) = \prob{Y = 1 \mid X =
x}$.
\end{definition}
We assume that all individuals have a positive outcome likelihood, i.e, $\ol(x)
> 0$ for all $x$.

\begin{definition}[Outcome Monotonic Cost]
\label{def:cost}
A cost function $c : \X \times \X \rightarrow \R_{\geq 0}$ is outcome monotonic
if for any $x, x', x^{*} \in X$ the following properties hold.
\begin{itemize}
    \item \textit{Zero-cost to move to lower outcome likelihoods.} $c(x, x') >
0$ if and only if $\ol(x') > \ol(x)$.
\item \textit{Monotonicity in first argument.} $c(x, x^{*}) > c(x', x^{*}) > 0$
if and only if $\ell(x^*) > \ell(x') > \ell(x)$.
    \item \textit{Monotonicity in second argument.} $c(x, x^{*}) > c(x, x') >
0$ if and only if $\ol(x^*) > \ol(x') > \ol(x)$.
\end{itemize}
\end{definition}

Under these assumptions, we can equivalently express the cost as a cost over
outcome likelihoods, $\olcost : \ell(\X) \times \ell(\X) \rightarrow \R_{\geq
0}$, defined in the following lemma.
\begin{lemma}
    \label{lem:outcome_cost}
    When the cost function $c(x, x')$ is outcome monotonic, then it can be
    written as a cost function over outcome likelihoods $\olcost(l, l')
    \coloneqq c(x, x')$ where $x, x' \in \X$ are any points such that $l =
    \ol(x)$ and $l' = \ol(x')$.
\end{lemma}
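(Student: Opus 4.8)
The plan is to recognize that the entire content of the lemma is a \emph{well-definedness} claim: the displayed formula $\olcost(l, l') \coloneqq c(x, x')$ only makes sense if the value $c(x,x')$ depends on $x,x'$ exclusively through the outcome likelihoods $\ell(x)$ and $\ell(x')$. So I would fix two pairs $(x_1, x_1')$ and $(x_2, x_2')$ with $\ell(x_1) = \ell(x_2) = l$ and $\ell(x_1') = \ell(x_2') = l'$, and prove $c(x_1, x_1') = c(x_2, x_2')$. Once this is established for arbitrary such pairs, $\olcost$ is a genuine function of $(l, l')$ and the lemma follows.

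First I would dispose of the degenerate case $l' \le l$. The zero-cost clause of Definition~\ref{def:cost} states $c(x, x') > 0$ if and only if $\ell(x') > \ell(x)$; since $\ell(x_i') = l' \le l = \ell(x_i)$ for $i \in \{1,2\}$, both costs are forced to $0$, giving $c(x_1, x_1') = 0 = c(x_2, x_2')$. The substantive case is $l' > l$, where the same clause guarantees that both costs, and in fact all the intermediate costs I will use, are strictly positive.

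For the case $l' > l$, I would interpolate between the two pairs one coordinate at a time, first changing the first argument and then the second. To show $c(x_1, x_1') = c(x_2, x_1')$, I invoke monotonicity in the first argument: if $c(x_1, x_1') > c(x_2, x_1')$ held, the property (applied with common second argument $x_1'$) would entail $\ell(x_1') > \ell(x_2) > \ell(x_1)$, and the strict inequality $\ell(x_2) > \ell(x_1)$ contradicts $\ell(x_1) = \ell(x_2)$; the opposite strict inequality $c(x_2, x_1') > c(x_1, x_1')$ is excluded by the identical argument with the roles of $x_1, x_2$ swapped. Because both costs are strictly positive (as $\ell(x_1') = l' > l = \ell(x_1) = \ell(x_2)$), equality is the only remaining possibility, so $c(x_1, x_1') = c(x_2, x_1')$. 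Repeating this verbatim with monotonicity in the second argument (common first argument $x_2$, targets $x_1', x_2'$ of equal likelihood $l'$) yields $c(x_2, x_1') = c(x_2, x_2')$, and chaining the two equalities closes the case.

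The main obstacle is a subtlety rather than a difficulty: Definition~\ref{def:cost} only ever asserts \emph{strict} inequalities, so it never directly says that equal outcome likelihoods give equal costs. The argument must therefore squeeze out equality by contradiction in both directions at once, using the fact that any strict gap between the two costs would, through the relevant monotonicity clause, manufacture a strict ordering between two outcome likelihoods that are equal by hypothesis. The only bookkeeping to watch is the positivity precondition built into each monotonicity clause (the ``$>0$'' in ``$c(x,x^*) > c(x',x^*) > 0$''), which is exactly why I separate out the trivial case $l' \le l$ and only run the monotonicity arguments once $l' > l$ forces all relevant costs to be strictly positive.
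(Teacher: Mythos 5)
Your proof is correct and follows the same approach as the paper: establishing that $c(x,x')$ depends only on $\ell(x)$ and $\ell(x')$, so that $\olcost$ is well-defined. The paper simply asserts that the monotonicity assumptions imply this; you supply the details (the zero-cost case $l' \le l$ and the two-sided contradiction argument needed because the definition only gives strict inequalities), which the paper leaves implicit.
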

	\begin{proof}
		The monotonicity assumptions imply that if $\ol(x^*) = \ol(x')$, then
$c(\cdot, x') = c(\cdot, x^*)$ and $c(x', \cdot) = c(x^*, \cdot)$.  Thus,
$\olcost(l, l') \coloneqq c(x, x')$ is well-defined because any points $x$ and
$x'$ such that $l = \ol(x)$ and $l' = \ol(x')$ yield the same value of $c(x,
x')$.
	\end{proof}

Throughout the paper, we will make use of the equivalent likelihood cost
$\olcost$
when
a
proof
is
more naturally expressed with $\olcost$,
rather
than with
the
underlying
cost $c$.

    \section{Institutional Utility Versus Social Burden}
\label{sec:inst_vs_social}
In this section, we characterize the inherent trade-offs between institutional
utility and social burden in the strategic setting. In particular, we show
any classifier that improves institutional utility over the best classifier in
the static setting causes a corresponding increase in social burden.

To prove this result we first show that any classifier can be
represented as a
threshold classifier that accepts all individuals with outcome likelihood greater than some threshold $\tau \in [0, 1]$. Then,
we show increasing utility for the institution requires raising this
threshold $\tau$, but that this always increases the social burden.

Equipped with this result, we show the (Pareto-optimal) set of classifiers that
increase institutional utility in the strategic setting corresponds to an
interval $I$. Each threshold $\tau \in I$ represents a particular trade-off
between institutional utility and social burden. Strategic classification
corresponds to one extremum: the best strategic utility but the worst social
burden. The non-strategic utility corresponds to the other: doing nothing to
prevent gaming.  Neither is likely to be the right trade-off in practical
contexts. Real domains will require a careful weighting of these two utilities,
leading to a choice somewhere in between. Thus, a main contribution of our work
is exposing this interval.

\subsection{General Trade-Off}

We now proceed to prove the trade-off between institutional utility and social
burden. Our first step is to show that in the strategic setting we can restrict
attention to classifiers that threshold on the outcome likelihood (assuming the
cost is outcome monotonic as in Definition \ref{def:cost}).
\begin{definition}[Outcome threshold classifier]
An outcome threshold classifier $f$ is a classifier of the form $f(x) =
\ind{\ol(x) \geq \tau}$ for $\tau \in [0, 1]$.
\end{definition}

In practice, the institution may not know the outcome likelihood $\ell(x) =
\prob{Y = 1 \mid X = x}$. However, as shown in the next lemma, for any
classifier that they do use, there is a threshold classifier with equivalent
institutional utility and social burden.  Thus, we can restrict our theoretical
analysis to only consider threshold classifiers.
\begin{lemma}
For any classifier $f$ there is an outcome threshold classifier $f'$ such that
$\iutil(f) = \iutil(f')$ and $\scost(f) = \scost(f')$.
\end{lemma}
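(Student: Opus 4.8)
The plan is to reduce the entire statement to a single structural fact about the individual burden function $b_f$. First I would observe that both quantities of interest are determined by $f$ only through $b_f(\cdot)$. For the social burden this is immediate from $\scost(f) = \ex[b_f(x)\mid y=1]$. For the institutional utility I would first characterize the best response: since $u_x(f,x')=f(x')-c(x,x')$ equals $1-c(x,x')$ on accepted points and $0$ at the original rejected point (recall $c(x,x)=0$ by the zero-cost property, so $b_f(x)=0$ whenever $f(x)=1$), the individual ends up accepted exactly when the cheapest affordable move is worthwhile, giving $f(\br(x)) = \ind{b_f(x)\le 1}$ under the stated tie-breaking convention. Hence $\iutil(f)=\prob{\ind{b_f(x)\le 1}=y}$ is also a function of $b_f$ alone. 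Consequently it suffices to exhibit a threshold classifier $f'$ with $b_{f'}(x)=b_f(x)$ for all $x$ outside a negligible set, since such a set cannot affect either the expectation or the probability.

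Next I would pin down the burden in terms of outcome likelihoods. Using Lemma~\ref{lem:outcome_cost} I can write $b_f(x)=\min_{l'\in A_f}\olcost(\ol(x),l')$, where $A_f=\{\ol(x'):f(x')=1\}$ is the set of accepted outcome likelihoods. The monotonicity and zero-cost properties of Definition~\ref{def:cost} then force this minimum to depend on $A_f$ only through its infimum $\tau^{*}=\inf A_f$. Indeed, moving to any likelihood at or below $\ol(x)$ is free, so an individual with $\ol(x)>\tau^{*}$ can reach an accepted point of strictly smaller likelihood at zero cost, giving $b_f(x)=0$; and for $\ol(x)<\tau^{*}$ every accepted point lies strictly above $\ol(x)$, so by monotonicity in the second argument the cheapest accepted point is the one of least likelihood, yielding $b_f(x)=\olcost(\ol(x),\tau^{*})$. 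I would then set $f'(x)=\ind{\ol(x)\ge\tau^{*}}$; its accepted set satisfies $\inf A_{f'}=\tau^{*}$ as well (since $A_f\subseteq A_{f'}\subseteq[\tau^{*},1]$), so the identical case analysis gives $b_{f'}(x)=b_f(x)$, and the lemma follows.

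The main obstacle I anticipate is the careful treatment of the boundary level $\ol(x)=\tau^{*}$ and of whether the infimum $\tau^{*}$ is genuinely attained by an accepted point, i.e.\ whether the ``$\min$'' defining $b_f$ is a true minimum rather than an infimum. When $\tau^{*}\notin A_f$ the minimal cost can be approached but not achieved, and $f$ and $f'$ may disagree precisely at $\ol(x)=\tau^{*}$; I would argue this discrepancy is confined to that single likelihood level, hence to a set that is empty or negligible, and therefore changes neither $\ex[b_f\mid y=1]$ nor $\prob{\ind{b_f(x)\le1}=y}$. I would also dispatch the degenerate case $A_f=\emptyset$ (no individual accepted, whence no profitable move exists and everyone is classified negatively) separately, matching it with $\tau^{*}=1$ and the convention on rejected individuals.
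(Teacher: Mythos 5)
Your proposal is correct and follows essentially the same route as the paper: both identify the threshold $\tau = \inf\{\ell(x') : f(x') = 1\}$ and use outcome monotonicity to argue that the acceptance region and the individual burden depend on $f$ only through this threshold, so the outcome threshold classifier at $\tau$ matches both the institutional utility and the social burden. Your write-up is if anything more careful than the paper's (which simply assumes the minimum accepted likelihood is attained and does not treat the empty-acceptance case), though your negligibility argument for the boundary level $\ell(x)=\tau$ implicitly requires that the distribution of $\ell(X)$ place no atom there.
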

\begin{proof} 
Let $\tau(f) = \min_{x: f(x) = 1} \ell(x)$ be the minimum outcome likelihood at which
an individual is accepted by the classifier $f$. Then, let
$f'(x) = \ind{\ol(x) \geq \tau(f)}$ be the outcome threshold classifier that
accepts all individuals above $\tau(f)$. We will show that the institutional
utility and social burden of $f$ and $f'$ are equal.
		
Since the cost function is outcome monotonic, it is the same cost to move
to any point with the same outcome likelihood. Furthermore, it is
higher cost to move to points of higher likelihood, i.e, if $\ol(x') >
\ol(x^{*}) > \ol(x)$, then $c(x, x')
> c(x, x^{*}) > 0$. Since individuals game optimally, when an individual changes
her features in response to the classifier $f$, she has no incentive to move to
a point with likelihood higher than $\tau(f)$ -- that would just cost more.
Therefore, she will move to any point with
likelihood $\tau(f)$ to be accepted by $f$ and will incur the same cost,
regardless of which point it is.
Thus, we can write the set of individuals accepted by $f$, $\acc(f)$, as
\begin{align*}
    \acc(f) & = \{x \mid f(\Delta(x)) = 1\} \\
    & = \{x \mid \exists x' : f(x') = 1, ~c(x, x') \leq 1 \} \\
    & = \{x \mid \exists ~x': \ol(x') = \tau(f), ~c(x, x') \leq 1 \} \, .
\end{align*}
		
Since $\tau(f) = \tau(f')$, the individuals accepted by $f$ and $f'$ are
equal: $\acc(f) = \acc(f')$. Therefore, their institutional utilities
$\iutil(f)$ and $\iutil(f')$ are equal. We can similarly show that the
social burdens of $f$ and $f'$ are also equal:
\begin{align*}
\scost(f) & = \ex \left [\min_{x':f(x') = 1} c(x, x') \mid y = 1 \right] \, , \\
          & = \ex \left [c(x, x') \mid y = 1 \right] \quad \text{for some }\;
    x' : \ol(x') = \tau(f), \\
    & = \ex \left [c(x, x') \mid y = 1 \right] \quad \text{for some }\; x' :
    \ol(x') = \tau(f'), \\
& = \ex \left [\min_{x':f'(x') = 1} c(x, x') \mid y = 1 \right] = \scost(f') \, .
\end{align*}
\end{proof}

Since outcome threshold classifiers can represent all
classifiers in the strategic setting, we will henceforth only consider
outcome threshold classifiers. Furthermore, we
will
overload
notation and use $\iutil(\tau)$ and $\scost(\tau)$ to refer to $\iutil(f_\tau)$
and
$\scost(f_\tau)$ where $f_\tau(x) = \ind{\ol(x) \geq \tau}$ is the
outcome threshold classifier with threshold $\tau$.

\begin{figure}
\centering
\includegraphics[width=\columnwidth]{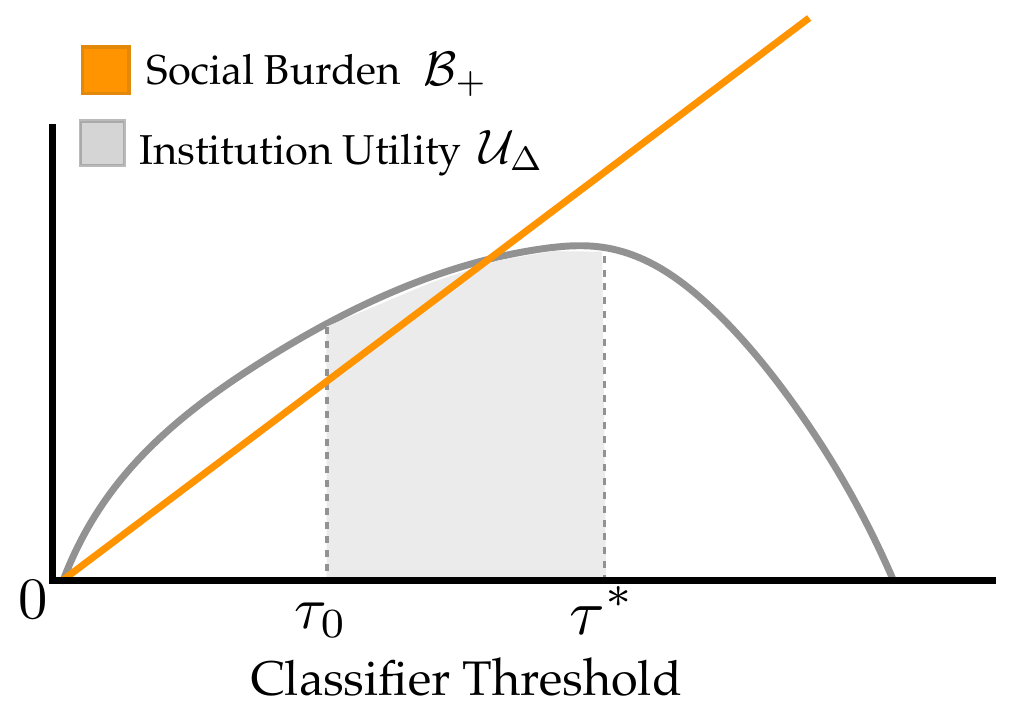}	
\caption{The general shapes of the institution utility and social burden as a
function of the threshold the institution chooses. The threshold $t_0$ is the
non-strategic optimal, while the threshold $\tau^{*}$ is the Stackelberg
equilibrium.}
\label{fig:tradeoff}
\end{figure}

Figure \ref{fig:tradeoff} illustrates how institutional utility and social
burden change as the threshold of the classifier increases. The institutional
utility is quasiconcave, while the social burden is monotonically
non-decreasing. The next lemma provides a formal characterization of the
shapes shown in Figure~\ref{fig:tradeoff}.

\begin{theorem}
    \label{thm:main_tradeoff}
    The institutional utility $\iutil(\tau)$ is quasiconcave in $\tau$ and has a
    maximum at a threshold $\tau^{*} \geq \tau_0$ where $\tau_0=0.5$ is the
    threshold of the non-strategic optimal classifier. The social burden
    $\scost(\tau)$ is monotonically non-decreasing in $\tau$. Furthermore, if
    $\iutil(\tau) \neq \iutil(\tau')$, then $\scost(\tau) \neq \scost(\tau')$.
\end{theorem}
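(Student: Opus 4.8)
The plan is to push everything into outcome-likelihood space, where both quantities become one-dimensional integrals against the law of $\ol(x)$, and then exploit the identity $\prob{y=1 \mid \ol(x)=\ol}=\ol$. First I would record the shape of the acceptance set. By the preceding lemma an individual with likelihood $\ol(x)$ best-responds to $f_\tau$ by moving (if needed) to likelihood exactly $\tau$ at cost $\olcost(\ol(x),\tau)$, so $x$ ends up accepted iff $\olcost(\ol(x),\tau)\le 1$. Since $\olcost(\cdot,\tau)$ is non-increasing in its first argument (monotonicity in the first argument) and vanishes at $\ol(x)=\tau$, this set is an upper interval: $\acc(f_\tau)=\{x:\ol(x)\ge \gamma(\tau)\}$ for a cutoff $\gamma(\tau)=\inf\{\ol:\olcost(\ol,\tau)\le 1\}\le\tau$. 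Because $\olcost(\ol,\cdot)$ is non-decreasing (monotonicity in the second argument), $\gamma$ is non-decreasing in $\tau$. Letting $\mu$ be the law of $\ol(x)$, I can then write $\iutil(\tau)=\int_{\gamma(\tau)}^{1}\ol\,\mathrm{d}\mu + \int_{0}^{\gamma(\tau)}(1-\ol)\,\mathrm{d}\mu$ and $\scost(\tau)=\ex[\olcost(\ol(x),\tau)\mid y=1]$.

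For quasiconcavity I would study $\phi(g)\coloneqq \int_g^1 \ol\,\mathrm{d}\mu + \int_0^g(1-\ol)\,\mathrm{d}\mu$, the accuracy of a likelihood cutoff $g$, so that $\iutil(\tau)=\phi(\gamma(\tau))$. A direct subtraction gives $\phi(g_2)-\phi(g_1)=\int_{g_1}^{g_2}(1-2\ol)\,\mathrm{d}\mu$ for $g_1<g_2$, so $\phi$ is non-decreasing on $[0,\tfrac12]$ and non-increasing on $[\tfrac12,1]$; hence $\phi$ is quasiconcave with maximal value at $g=\tfrac12$. Since $\gamma$ is monotone, the upper level sets of $\phi\circ\gamma$ are preimages of intervals under a monotone map, hence intervals, giving quasiconcavity of $\iutil$. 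Finally, for $\tau\le\tfrac12$ we have $\gamma(\tau)\le\tau\le\tfrac12$, the range on which $\phi$ is non-decreasing, so $\iutil=\phi\circ\gamma$ is non-decreasing on $[0,\tfrac12]$; therefore a maximizer $\tau^{*}$ can be taken with $\tau^{*}\ge\tfrac12=\tau_0$.

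Monotonicity of the social burden is immediate from the likelihood representation: for fixed $\ol$, $\olcost(\ol,\tau)$ is non-decreasing in $\tau$ (it is $0$ while $\tau\le\ol$ and strictly increasing once $\tau>\ol$), so $\scost(\tau)=\ex[\olcost(\ol(x),\tau)\mid y=1]$ is non-decreasing in $\tau$.

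The main obstacle is the coupling claim, which I would prove by contraposition: assume $\tau<\tau'$ with $\scost(\tau)=\scost(\tau')$ and show $\iutil(\tau)=\iutil(\tau')$. Writing $\scost(\tau')-\scost(\tau)=\ex[\olcost(\ol(x),\tau')-\olcost(\ol(x),\tau)\mid y=1]$, the integrand is non-negative and, crucially, strictly positive for every $\ol<\tau'$ (by strict monotonicity in the second argument). Hence equality forces $\prob{\ol(x)<\tau'\mid y=1}=0$, i.e. $\int_0^{\tau'}\ol\,\mathrm{d}\mu=0$; since $\ol(x)>0$ everywhere by assumption, this upgrades to $\mu([0,\tau'))=0$ --- there is simply no probability mass below $\tau'$. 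Now $\iutil(\tau')-\iutil(\tau)$ depends only on the band $[\gamma(\tau),\gamma(\tau'))$ that leaves the acceptance set, evaluating to $\prob{\gamma(\tau)\le\ol(x)<\gamma(\tau'),\,y=0}-\prob{\gamma(\tau)\le\ol(x)<\gamma(\tau'),\,y=1}$; but $\gamma(\tau')\le\tau'$ places this band inside $[0,\tau')$, which carries no mass, so both terms vanish and $\iutil(\tau)=\iutil(\tau')$. The delicate point to get right is precisely this transfer --- that equal burden forces the absence of mass exactly on the interval where the two acceptance sets differ --- since that is what links an otherwise purely monotone burden to the non-monotone, quasiconcave utility.
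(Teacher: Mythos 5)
Your proof is correct, and it reaches the same conclusions by the same high-level strategy as the paper (reduce everything to likelihood-threshold acceptance sets $\acc(\tau)=\{x:\ol(x)\ge\gamma(\tau)\}$ with $\gamma$ non-decreasing), but the execution differs in two worthwhile ways. For quasiconcavity, the paper argues from the nesting $\acc(\tau_0)\supseteq\acc(\tau_1)\supseteq\acc(\tau^*)$ together with optimality of $\acc(\tau^*)$, a step that is not valid for arbitrary nested sets and implicitly relies on the threshold structure; your identity $\phi(g_2)-\phi(g_1)=\int_{g_1}^{g_2}(1-2\ol)\,\mathrm{d}\mu$ makes that structure explicit, proves unimodality of $\phi$ with peak at $\tfrac12$ directly, and then gets both quasiconcavity of $\iutil=\phi\circ\gamma$ and $\tau^*\ge\tau_0$ from monotonicity of $\gamma$ --- this is arguably tighter than the paper's version. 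For the coupling claim, the paper argues forward (a utility difference yields a witness $x\in\acc(\tau)\setminus\acc(\tau')$ with $p(X=x\mid Y=1)>0$ whose burden strictly increases), while you argue the contrapositive: equal burden forces $\mu([0,\tau'))=0$ because $\olcost(\ol,\tau')-\olcost(\ol,\tau)$ is strictly positive on all of $\{\ol<\tau'\}$, not merely on the symmetric difference of acceptance sets, and the utility difference is supported on the band $[\gamma(\tau),\gamma(\tau'))\subseteq[0,\tau')$. Both directions hinge on the same two facts (pointwise monotonicity of the individual burden plus the assumption $\ol(x)>0$ to convert positive mass into positive conditional mass given $Y=1$), so the logical content is equivalent, but your version isolates a slightly stronger intermediate statement. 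No gaps.
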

\begin{proof}
Let $\acc(\tau)$ and $\accn(\tau)$ be the set of individuals accepted by
$f$ in the strategic and non-strategic setting, respectively. If $\tau <
\tau_0$, we have $\acc(\tau) \supseteq \acc(\tau_0) \supseteq \accn(\tau_0)$.
Since $\accn(\tau_0)$ is the optimal non-strategic acceptance region, any $x
\not\in \accn(\tau_0)$ has $\ol(x) < 0.5$, and one can increase $\iutil$ by not
accepting $x$, which implies $\iutil(\tau) \leq \iutil(\tau_0)$. 
Therefore, if a threshold $\tau^{*}$ is optimal for the
institution,~i.e.~if $\iutil(\tau^{*}) = \max_{\tau} \iutil(\tau)$, then $\tau^*
\geq \tau_0$. 
		
Recall that a univariate function $f(z)$ is quasiconcave if there exists
$z^*$ such that $f$ is non-decreasing for $z < z^*$ and is
non-increasing for $z > z^*$. Let $\tau^{*}$ be as above. For $\tau_0 < \tau_1 <
\tau^{*}$, we have $\acc(\tau_0) \supseteq \acc(\tau_1)
\supseteq \acc(\tau^{*})$. Since $\iutil(\tau^{*})$ is optimal, $\acc(\tau^{*})$
is the optimal strategic acceptance region, and thus $\iutil(\tau_0) \leq
\iutil(\tau_1) \leq \iutil(\tau^{*})$. Similarly, if $\tau_1 > \tau_0 > \tau^{*}$
we have that $\acc(\tau^*) \supseteq \acc(\tau_0) \supseteq \acc(\tau_1)$, 
and thus $\iutil(\tau_1) \leq \iutil(\tau_0) \leq \iutil(\tau^*)$. Therefore,
$\iutil(\tau)$ is quasiconcave in $\tau$.
		
The individual burden $b_\tau(x) = \min_{\ol(x') \geq \tau} c(x, x')$ is monotonically non-decreasing in $\tau$. Since the social burden $\scost(\tau)$ is equal to $\ex[b_\tau(x) \mid y = 1]$, the social burden is also
monotonically non-decreasing.
		
Suppose $\iutil(\tau) \neq \iutil(\tau')$ and without loss of generality let
$\tau < \tau'$. For all individuals $x$, $b_{\tau'}(x) \geq b_{\tau}(x)$. If
there is at least one individual $x$ such that $b_{\tau'}(x) > b_{\tau}(x)$,
then $\scost(\tau') > \scost(\tau)$. But since $\iutil(\tau) \neq
\iutil(\tau')$, there must exist an individual $x$ such that $x \in
\acc(\tau) / \acc(\tau')$ and $p(X = x \mid Y = 1) > 0$ since $\ol(x) > 0$ 
by assumption. For this individual
$b_{\tau'}(x) > b_{\tau}(x)$. Therefore, $\iutil(\tau) \neq \iutil(\tau')$
implies $\scost(\tau') \neq \scost(\tau)$.
\end{proof}
	
As a corollary, if the institution increases its utility beyond that
attainable by the optimal classifier in the non-strategic setting, then the
institution also causes higher social burden.
\begin{corollary}
Let $\tau$ be any threshold and $\tau_0 = 0.5$ be the optimal threshold in the
non-strategic setting. If $\iutil(\tau) > \iutil(\tau_0)$, then $\scost(\tau) > \scost(\tau_0)$.
\end{corollary}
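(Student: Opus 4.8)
The plan is to derive this almost directly from Theorem~\ref{thm:main_tradeoff}, which already packages the three facts I need: quasiconcavity of $\iutil$ with maximizer $\tau^{*} \geq \tau_0$, monotonicity of $\scost$, and the separation property that distinct institutional utilities force distinct social burdens. The only real work is to combine these in the correct order and, crucially, to pin down that the hypothesis $\iutil(\tau) > \iutil(\tau_0)$ forces $\tau$ to lie strictly above $\tau_0$, so that monotonicity of $\scost$ points in the direction I want.

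First I would establish $\tau > \tau_0$. Suppose instead $\tau \le \tau_0$. The case $\tau = \tau_0$ is immediate: it would give $\iutil(\tau) = \iutil(\tau_0)$, contradicting the strict inequality in the hypothesis. For $\tau < \tau_0$, I would invoke the sub-argument already carried out inside the proof of Theorem~\ref{thm:main_tradeoff}: any threshold below $\tau_0$ satisfies $\acc(\tau) \supseteq \acc(\tau_0) \supseteq \accn(\tau_0)$, and since $\accn(\tau_0)$ is the optimal non-strategic acceptance region, accepting any additional individual with outcome likelihood below $0.5$ only decreases utility, so $\iutil(\tau) \le \iutil(\tau_0)$. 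This again contradicts $\iutil(\tau) > \iutil(\tau_0)$. Hence $\tau > \tau_0$.

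With $\tau > \tau_0$ in hand, the monotonicity of the social burden (the second assertion of the theorem) gives $\scost(\tau) \ge \scost(\tau_0)$. To upgrade this to a strict inequality I would appeal to the final assertion of the theorem: since $\iutil(\tau) > \iutil(\tau_0)$ in particular means $\iutil(\tau) \ne \iutil(\tau_0)$, we obtain $\scost(\tau) \ne \scost(\tau_0)$. Combining $\scost(\tau) \ge \scost(\tau_0)$ with $\scost(\tau) \ne \scost(\tau_0)$ yields $\scost(\tau) > \scost(\tau_0)$, as required.

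Since every ingredient is a black-box consequence of the preceding theorem, I expect essentially no obstacle here; the one point that deserves care is not to short-circuit the argument by invoking monotonicity alone, which only yields $\ge$, nor the separation property alone, which only yields $\ne$. The strict conclusion genuinely requires both, glued together by the observation that the utility hypothesis places $\tau$ strictly above $\tau_0$.
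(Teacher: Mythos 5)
Your proof is correct and matches the argument the paper intends: the corollary is stated without an explicit proof precisely because it follows from Theorem~\ref{thm:main_tradeoff} in the way you describe (hypothesis forces $\tau > \tau_0$ via the quasiconcavity argument, monotonicity gives $\scost(\tau) \geq \scost(\tau_0)$, and the separation property upgrades this to strict inequality). Your observation that neither monotonicity nor separation alone suffices, but their combination does, is exactly the right point of care.
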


\subsection{Choosing a Concrete Trade-off}
The previous section shows increases in institutional utility come at a cost in
terms of social burden and vice-versa. This still leaves open the question: what
is the concrete trade-off an institution should choose?

Theorem~\ref{thm:main_tradeoff}
provides
a
precise
characterization
of
the
choices available to trade-off between institutional utility and social burden.
The
baseline choice for the institution is to not account for strategic behavior
 and use the
non-strategic
optimum $\tau_0$. Maximizing utility without regard to
social
burden leads the institution to choose $\tau\opt$. In
general,
the
interval
$[\tau_0,
\tau\opt]$
offers
the set of trade-offs the institution considers. Choosing $\tau > \tau_0$ can
increase robustness at the price of increasing social burden. Thresholds $\tau
>
\tau\opt$ are not Pareto-efficient and are not considered.

Much of the prior work in machine learning has focused exclusively on solutions
corresponding to the
thresholds at the extreme: $\tau_0$ and $\tau\opt$.  The
threshold
$\tau_0$
is the solution when strategic behavior is not accounted for. The
threshold
$\tau\opt$
is also known as the \emph{Stackelberg
equilibrium} and is the subject of recent work in
strategic classification \cite{bruckner2011stackelberg,
hardt2016strategic,
dong2018strategic}. While using
$\tau\opt$ may be warranted in some cases, a proper accounting of social burden
would lead institutions to choose classifiers somewhere \emph{between} the
extremes of $\tau_0$ and $\tau^{*}$.

The exact choice of $\tau \in [\tau_0, \tau^{*}]$ is
\emph{context-dependent}
and
depends on balancing concerns between institutional and broader social
interest.
We now highlight cases where using $\tau_0$ or $\tau\opt$ may be suboptimal,
and
using a threshold $\tau \in (\tau_0, \tau\opt)$ that balances robustness with
social burden is preferable.

\begin{example}[Expensive features.]
If measuring a feature is costly for individuals and offers limited
gains in predictive accuracy, an institution may choose to ignore the feature, even if
it
means giving up accuracy on the margin. In an educational context, a university
may decide to no longer require applicants to submit standardized test scores,
which can cost applicants
hundreds of dollars, if the corresponding improvement in admissions
outcomes is very small
\cite{princeton2018higher}. 
\end{example}

\begin{example}[Reducing social burden under resource constraints.]
Aid organizations increasingly use machine learning to determine where to allocate resources
after natural disasters \citep{imran2014aidr}. In these cases, positive
individuals are precisely those people who are in need of aid and may
experience
very high costs to change their features. Using thresholds with high social
burden
is therefore undesirable. At the same time, aid organizations often
face significant resource constraints. False positives from individuals gaming
the classifier ties up resources that could be better used elsewhere.
Consequently, using the non-strategic threshold is also undesirable.  The aid
organization should choose a some threshold $\tau$ with $\tau_0 < \tau <
\tau^{*}$ that reflects these trade-offs.
\end{example}

\begin{example}[Misspecification of agent model.]
Strategic classification models typically assume the
individual optimally responds to the classifier $f$. In reality,
individuals will not have perfect knowledge of the classifier $f$ when it is
first deployed. Instead, they may be able to learn about how the classifier
works over time, and gradually improve their ability to game the classifier.
For
example, self-published romance authors exchanged information in private chat
groups about how to best game Amazon's book recommendation algorithms
\citep{amazon_romance}. For the institution, it is difficult to a priori model
the
dynamics of how information about the classifier propagates. A preferable
solution may be to simply make the assumption that the individual can best
respond to the classifier, but to only gradually increase the threshold from
the
non-strategic $\tau_0$ to the Stackelberg optimal $\tau^*$ over time.
\end{example}

In fact, misspecification of the agent model (described above), is
why \citet{bruckner2012static} suggest the Stackelberg equilibrium is too
conservative, and instead prefer to use Nash equilibrium strategies.
Complementary to their observation, we show that there is a more general reason
Nash equilibria may be preferable. Namely, that Nash equilibria have lower
social burden than the Stackelberg solution. As the following lemma shows, in
our context, the set of Nash equilibria form an interval $[t_N,
\tau^{*}]
\subset I$ for some $t_N \geq t_0$. The proof is deferred to the appendix.
\begin{lemma}
\label{lem:nash}
Suppose the cost over likelihoods $c_L$ is continuous and $\ol(\X) = [0, 1]$,
i.e, all likelihoods have non-zero support. Then, the set of Nash equilibrium
strategies for the institution is $[\tau_N, \tau^*]$ for some $\tau_N \geq
\tau_0$ where $\tau_0 = 0.5$ is the non-strategic optimal threshold and
$\tau^*$
is the Stackelberg equilibrium strategy.
\end{lemma}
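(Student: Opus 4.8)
The plan is to collapse the Nash condition onto a single institutional threshold and then turn the resulting self-consistency requirement into two one-sided inequalities. In a simultaneous-move equilibrium the institution commits to a threshold $\tau$ while each individual commits to a feature vector; at equilibrium each individual's choice is a best response to $\tau$ and $\tau$ is a best response to the resulting, now \emph{fixed}, configuration of features. The individual side is automatic: for any $\tau$ the best response $\br(\x)$ has each $\x$ with $\ol(\x) \ge \tau$ stay, each $\x$ with $\ol(\x) < \tau$ and $\olcost(\ol(\x),\tau) \le 1$ move to a point of likelihood exactly $\tau$, and everyone else stay. So everything rests on the institutional side: I must characterize the $\tau$ for which $\tau$ is itself a best response against individuals who best-respond to $\tau$. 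Let $\underline{\ell}(\tau)$ solve $\olcost(\underline{\ell}(\tau),\tau)=1$ (the lowest original likelihood that can afford to reach $\tau$, set to $0$ if everyone can). After the response the population is a \emph{point mass at likelihood $\tau$} made of all movers (original likelihoods in $[\underline{\ell}(\tau),\tau)$), \emph{stayers} at their original $\ol(\x)\ge\tau$, and \emph{non-movers} at their original $\ol(\x)<\underline{\ell}(\tau)$; the band $(\underline{\ell}(\tau),\tau)$ is now empty.

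First I would compute the institution's best response to this fixed configuration. Because a label is drawn from the \emph{original} features, the point mass sitting at feature-likelihood $\tau$ actually contains positives at rate $\bar\ell(\tau) := \ex[\ol(\x)\mid \underline{\ell}(\tau)\le \ol(\x)<\tau] < \tau$, the mean original likelihood of the movers. A threshold-$\tau'$ rule applied to the fixed features earns, per unit mass at a feature-likelihood whose true positive rate is $q$, a payoff $q$ if it accepts and $1-q$ if it rejects, so the optimal rule accepts exactly the feature points with true positive rate at least $0.5$. Evaluating the only two candidate deviations from $\tau'=\tau$ pins down the equilibrium conditions: deviating \emph{upward} rejects the point mass (gaining $1-2\bar\ell(\tau)$ per unit mass) while discarding stayers just above $\tau$ (a loss of order $O(\delta)$), which is profitable for small $\delta$ exactly when $\bar\ell(\tau)<0.5$; deviating \emph{downward} first crosses the empty band with no change and then begins accepting non-movers of likelihood just below $\underline{\ell}(\tau)$, profitable exactly when $\underline{\ell}(\tau)>0.5$. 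Hence $\tau$ is a Nash equilibrium if and only if $\bar\ell(\tau)\ge 0.5$ and $\underline{\ell}(\tau)\le 0.5$.

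Next I would show these two inequalities cut out an interval with the stated endpoints. Since $\acc(\tau)=\{\x:\ol(\x)\ge \underline{\ell}(\tau)\}$, a direct computation gives $\iutil(\tau)=\int_{\underline{\ell}(\tau)}^{1} l\,\mathrm{d}P(l)+\int_{0}^{\underline{\ell}(\tau)}(1-l)\,\mathrm{d}P(l)$, where $P$ is the law of $\ol(\x)$; as a function of $a=\underline{\ell}(\tau)$ its derivative is $(1-2a)p(a)$, so it rises for $a<0.5$ and falls for $a>0.5$. Outcome-monotonicity and continuity of $\olcost$ make $\underline{\ell}(\tau)$ continuous and strictly increasing, so $\iutil$ increases until $\underline{\ell}(\tau)=0.5$ and decreases afterward; as $\tau^{*}$ is the maximizer of $\iutil$, the condition $\underline{\ell}(\tau)\le 0.5$ is exactly $\tau\le\tau^{*}$ (with $\underline{\ell}(\tau^{*})=0.5$ when $\tau^{*}$ is interior). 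The quantity $\bar\ell(\tau)$ is the conditional mean of $\ol(\x)$ over the window $[\underline{\ell}(\tau),\tau)$ whose endpoints both increase with $\tau$; because full support ($\ol(\X)=[0,1]$) puts positive mass everywhere, sliding the window right strictly raises this mean, so $\bar\ell$ is strictly increasing and $\bar\ell(\tau)\ge 0.5$ is equivalent to $\tau\ge\tau_N$ with $\bar\ell(\tau_N)=0.5$. Finally $\bar\ell(\tau)<\tau$ always, so $\bar\ell(\tau_N)=0.5$ forces $\tau_N>0.5=\tau_0$; and at the top the movers have original likelihoods at least $\underline{\ell}(\tau^{*})=0.5$, giving $\bar\ell(\tau^{*})>0.5$ and hence $\tau_N<\tau^{*}$, so the Nash set is the nonempty interval $[\tau_N,\tau^{*}]$ with $\tau_N\ge\tau_0$.

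The hard part will be the institutional best-response computation: one must recognize that gaming creates a point mass at $\tau$ whose \emph{true} positive content is the average $\bar\ell(\tau)$ rather than $\tau$, and that the institution is indifferent across the emptied band $(\underline{\ell}(\tau),\tau)$, so that only the upward and downward boundary deviations bind. The rest is bookkeeping that leans squarely on the hypotheses: continuity of $\olcost$ and full support are exactly what make $\underline{\ell}$ and $\bar\ell$ continuous and strictly monotone, guaranteeing $\tau_N$ and $\tau^{*}$ are well defined and that the two one-sided conditions meet in a genuine interval. I would dispatch the measure-zero tie-breaking at $\olcost=1$ and the degenerate regimes ($\underline{\ell}(\tau)=0$, or costs so small that $\tau^{*}=1$ and the interval may collapse) in passing, since none of these affects the endpoints in the interior case that the lemma targets.
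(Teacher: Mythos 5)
Your proposal is correct and follows essentially the same route as the paper's proof: both reduce the Nash condition to checking the three post-gaming segments (stayers above $\tau$, movers pooled at $\tau$, and non-movers below the affordability cutoff), identify the binding conditions as the cutoff lying at or below $0.5$ (equivalently $\tau \le \tau^*$, since $\tau^*$ is the largest threshold with $\olcost(0.5,\tau)\le 1$) and the movers' conditional positive rate being at least $0.5$, and then use monotonicity of that conditional mean over the right-shifting window to conclude the Nash set is an interval $[\tau_N,\tau^*]$ with $\tau_N \ge \tau_0$. Your packaging via the two monotone functions $\underline{\ell}(\tau)$ and $\bar\ell(\tau)$ is a clean restatement of the paper's three-case argument rather than a genuinely different proof.
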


The Stackelberg equilibrium requires the institution to choose $\tau^{*}$,
whereas Nash equilibria give the institution latitude to trade-off between
institutional utility and social burden by choosing from the interval
$[t_N,
\tau^{*}]
\subset I$.
This
provides
an
additional argument
in
favor
of
Nash equilibria-- institutions can still reason in terms of equilibria \emph{and}
achieve more favorable outcomes in terms of social burden.
	
	\section{Fairness to Subpopulations}
\label{sec:subpop}
Our previous section showed that increased robustness in the face of strategic
behavior comes at the price of additional social burden. In this section, we
show this social burden is not fairly
distributed: when the individuals being classified are from latent
subpopulations, say of race, gender, or socioeconomic status, the social burden
can disproportionately fall on disadvantaged subpopulations.  Furthermore, we
find that improving the institution's utility can exacerbate the gap between
the
social burden incurred by an advantaged and disadvantaged group.

Concretely, suppose each individual is from a subpopulation $g \in \{a, b\}$.
The social burden a classifier $f$ has on a group $g$ is the expected
minimum
cost required for a positive individual from group $g$ to be accepted:
$\scostg{g}(f) = \ex \left [\min_{x':f(x')=1} c(x, x') \mid Y = 1, G = g \right
]$.
We can then define the social gap between groups $a$ and $b$:

\begin{definition}[Social gap]
    The \textit{social gap} $\sgap(f)$ induced by a classifier $f$ is the
    difference in the social burden to group $b$ compared to $a$: $\sgap(f) =
    \scostg{b}(f) - \scostg{a}(f)$.
\end{definition}

The social gap is a measure of how much more costly it is for a positive
individual from group $b$ to be accepted by the classifier than a positive
individual from group $a$. For example, there is evidence that women need to
attain higher educational qualifications than their male counterparts to
receive
the same salary \cite{carnevale2018women}. 

A high social gap is alarming for two reasons. First, even when two people from
group $a$ and group $b$ are equally qualified, the individual from group $a$
may
choose not to participate at all because of the cost she would need to endure
to
be accepted. Secondly, if she does decide to participate, she may continue to
be
at a disadvantage after being accepted because of the additional cost she had
to
endure, e.g., repaying student loans.

Non-strategic classification can already induce a social gap between two
groups, and strategic classification can exacerbate this gap.
We show this under two natural ways group $b$ may be
disadvantaged. In the first setting, the feature distributions of group $a$ and
$b$ are such that a positive individual from group $b$ is less likely to be
considered positive, compared to group $a$. In the second setting, individuals
from group $b$ have a higher cost to adapt their features compared to group
$a$.
Under both of these conditions, any improvement the institution can make to its
own strategic utility has the side effect of worsening (increasing)
the social gap.
	
\subsection{Different Feature Distributions}
\label{sec:diff-feats}
In the first setting we analyze, the way groups $a$ and $b$ differ is through
their distributions over features. We say that group $b$ is disadvantaged if
the
features distributions are such that positive individuals from group $b$ are
less likely to be considered positive than those from group $a$. Formally, this
can be characterized as the following:

\begin{definition}[Disadvantaged in features] \label{def:feats}
    \label{def:fosd}
    Let $\olrv{g} = \ol(X) \mid Y = 1, G = g$ be the outcome likelihood of a
    positive individual from group $g$, and let $\olcdf{g}$ be the cumulative
    distribution function of $\olrv{g}$. We say that group $b$ is disadvantaged
    in features if $\olcdf{b}(l) > \olcdf{a}(l)$ for all $l \in (0, 1)$.
\end{definition}

In the economics literature, the relationship between $\olrv{a}$ and $\olrv{b}$
is referred to as \textit{strict first-order stochastic dominance}
\cite{levy1992stochastic}. Intuitively,
that group $b$ is disadvantaged in features if and only if the distribution of
$\olrv{a}$ can be transformed to the distribution of $\olrv{b}$ by transferring
probability mass from higher values to lower values. This definition captures
the notion that the outcome likelihood of positive individuals from group $b$
is
skewed lower than the outcome likelihood of positive individuals from $a$.

In a case study on FICO credit scores in Section~\ref{sec:experiments}, we find
the minority group (blacks) is disadvantaged in features compared to the
majority group (whites) (see Figure \ref{fig:feat_disadvantage}). There are
many
reasons that a group could be disadvantaged in features. Below, we go through a
few potential causes.

\begin{example}[Group membership explains away features]
    Even if two groups are equally likely to have positive individuals, i.e.,
    $\prob{Y = 1 \mid G = a} = \prob{Y = 1 \mid G = b}$, group $b$ can still be
    disadvantaged compared to group $a$. Consider the graph below. Although the
    label $Y$ is independent of the group $G$, the label $Y$ is not independent
    of the group $G$ once conditioned on the features $X$ because the group $G$
    can provide an alternative reason for the observed features.
	\begin{center}
		\includegraphics[scale=0.4]{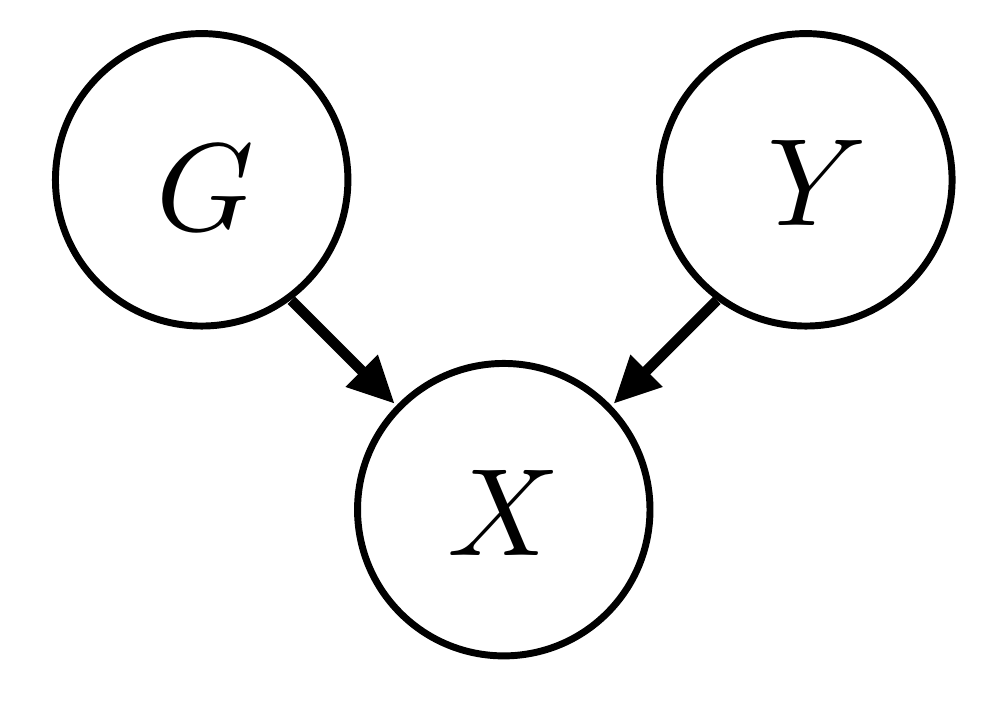}	
	\end{center}
    Concretely, let groups $a$ and $b$ be native and non-native speakers of
    english, $X$ be the number of grammatical errors on an individual's job
    application, and $Y$ be whether the individual is a qualified candidate.
    Negative individuals ($Y = 0$) are less meticulous when filling out their
    application and more likely to have grammatical errors. However, for
    individuals from group $b$ there is another explanation for having
grammatical errors -- being a non-native speaker. Thus, positive individuals
    from group $b$ end up with lower outcome likelihoods than those from $a$,
    even though they may be equally qualified.
\end{example}
	
\begin{example}[Predicting base rates]
    Suppose the rate of positives in group $b$ is lower than that of group $a$:
    $\prob{Y = 1 \mid G = b} < \prob{Y = 1 \mid G = a}$. If there is a feature
in the dataset that can be used as a proxy for predicting the group, such as
    zip code or name for predicting race, then the outcome likelihoods of
    positive individuals from group $b$ can end up lower than those of positive
    individuals from group $a$ because the features are simply predicting the
    base rate of each group.
\end{example}

\paragraph{\textbf{Social gap increases.}} 
We now state and prove the main result showing that the social gap increases as
the
institution increases its threshold for acceptance. Before turning to the
result, we introduce one technical requirement. The \emph{likelihood condition}
is that $\frac{\partial \olcost(l, \tau)}{\partial l}$ is monotonically
non-increasing in $\tau$ for $l, \tau \in [0, 1]$. When the cost function $c$ is
outcome monotonic, the likelihood condition is satisfied for a broad class of
differentiable likelihood cost functions $\olcost$, such as the following
examples.

\begin{itemize}
	\item Differentiable separable cost functions of the form $\olcost(l, l') =
\max(c_2(l') - c_1(l), 0)$ for $c_1, c_2 : [0, 1] \rightarrow
\R_{\geq 0}$.
	\item Differentiable shift-invariant cost functions of the form
	\begin{align*}
	\olcost(l, l') = \begin{cases}
 		c_0(l' - l) & l < l' \\
 		0 & l \geq l'
 	\end{cases} \, ,
	\end{align*}
	for convex $c_0 : [0, 1] \rightarrow \R_{\geq 0}$.
\end{itemize}
 Notably, any linear cost $\olcost(l, l') = \max(\alpha (l'-l), 0)$ where
$\alpha > 0$ satisfies the likelihood condition.  

Under the likelihood condition, we now show that the social gap increases as the
institution increases its threshold for acceptance.
	
	\begin{theorem}
		\label{thm:fosd}
Let $\tau \in (0, 1]$ be the threshold of the classifier. If group $b$ is
disadvantaged in features compared to group $a$, and $\frac{\partial \olcost(l,
\tau)}{\partial l}$ is monotonically non-increasing in $\tau$, then
$\sgap(\tau)$ is positive and monotonically increasing over $\tau$.
\end{theorem}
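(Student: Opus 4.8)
The plan is to reduce both assertions to a sign analysis of a single integral. First I would rewrite the per-individual burden at threshold $\tau$ entirely in terms of outcome likelihoods. By the monotonicity-in-second-argument property, an individual with likelihood $l = \ol(x) < \tau$ minimizes cost by moving to a point of likelihood exactly $\tau$, paying $\olcost(l,\tau)$, and an individual with $l \ge \tau$ pays nothing; since outcome monotonicity also forces $\olcost(l,\tau)=0$ for $l \ge \tau$, the burden is uniformly $\olcost(\ol(x),\tau)$. Thus $\scostg{g}(\tau) = \ex[\olcost(\olrv{g},\tau)] = \int_0^1 \olcost(l,\tau)\,\leibd \olcdf{g}(l)$, and the monotonicity-in-first-argument property makes $l \mapsto \olcost(l,\tau)$ non-increasing, being strictly decreasing on $[0,\tau)$ and identically zero on $[\tau,1]$.

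Next I would integrate by parts against $\olcdf{g}$. The boundary terms vanish because $\olcost(\cdot,\tau)$ is zero at and above $\tau$ and $\olcdf{g}(0)=0$ (the likelihood carries no mass at $0$ by the assumption $\ol(x)>0$), which yields the representation
\begin{align*}
\sgap(\tau) = -\int_0^1 \frac{\partial \olcost(l,\tau)}{\partial l}\,\bigl(\olcdf{b}(l) - \olcdf{a}(l)\bigr)\,\leibd l .
\end{align*}
Both claims then follow from the sign of the integrand.

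For positivity, $\partial \olcost(l,\tau)/\partial l \le 0$ makes the factor $-\partial \olcost/\partial l$ nonnegative and strictly positive on $(0,\tau)$, while the disadvantaged-in-features hypothesis (strict first-order stochastic dominance) gives $\olcdf{b}(l)-\olcdf{a}(l)>0$ on $(0,1)$; hence the integrand is nonnegative and strictly positive on $(0,\tau)$, so $\sgap(\tau)>0$ for every $\tau \in (0,1]$. For monotonicity I would difference the representation at $\tau_1 < \tau_2$: the likelihood condition states that $\partial \olcost(l,\tau)/\partial l$ is non-increasing in $\tau$, so $\partial \olcost(l,\tau_2)/\partial l - \partial \olcost(l,\tau_1)/\partial l \le 0$; multiplying by the positive factor $\olcdf{b}(l)-\olcdf{a}(l)$ and negating yields $\sgap(\tau_2)-\sgap(\tau_1)\ge 0$. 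Under additional smoothness this is the statement that $\frac{\leibd}{\leibd\tau}\sgap(\tau) = -\int_0^1 \frac{\partial^2 \olcost(l,\tau)}{\partial\tau\,\partial l}(\olcdf{b}(l)-\olcdf{a}(l))\,\leibd l \ge 0$.

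The hard part will be rigor at the kink $l=\tau$, where $\olcost(\cdot,\tau)$ switches from strictly positive to identically zero and $\partial \olcost/\partial l$ may jump: I must confirm the integration-by-parts boundary terms truly vanish (they do, since $\olcost$ is continuous and equals zero at $\tau$) and that the Riemann--Stieltjes integral against $\olcdf{g}$ is well defined without assuming a density for $\olrv{g}$. I would also need to justify extending the integral from $[0,\tau]$ to $[0,1]$ (harmless because $\partial \olcost/\partial l$ vanishes above $\tau$) and to read ``monotonically increasing'' as strict only on the range where the stochastic dominance and the likelihood condition are themselves strict; the bare hypotheses give non-decreasing.
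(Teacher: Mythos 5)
Your proposal is correct and follows essentially the same route as the paper: rewrite the group burden as $\int_0^\tau \olcost(l,\tau)\,\leibd\olcdf{g}(l)$, integrate by parts (with the boundary terms vanishing since $\olcost(\tau,\tau)=0$ and $\olcdf{g}(0)=0$) to get $\sgap(\tau)=\int_0^\tau \frac{\partial \olcost(l,\tau)}{\partial l}\bigl(\olcdf{a}(l)-\olcdf{b}(l)\bigr)\,\leibd l$, and read off both claims from the signs of the factors. The one remark worth making is that the strictness you flagged as a worry is already contained in your own differencing: on $l\in(\tau_1,\tau_2)$ you have $\partial\olcost(l,\tau_1)/\partial l=0$ but $\partial\olcost(l,\tau_2)/\partial l<0$ while $\olcdf{b}(l)-\olcdf{a}(l)>0$, so that slice contributes strictly positively and $\sgap$ is strictly increasing --- this is precisely the paper's decomposition of the difference into an integral over $[0,\tau_1]$ (non-negative by the likelihood condition) plus one over $[\tau_1,\tau_2]$ (strictly positive).
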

\begin{proof}
By Lemma~\ref{lem:outcome_cost}, any outcome monotonic cost function can be
written as a cost over outcome likelihoods. Therefore, the social burden can be
written as
\begin{align*}
    \scostg{g}(\tau)
    &= \ex \left [c_L(\ol(x), \tau) \mid Y = 1, G = g \right ]\\
    &= \int_{0}^\tau \olcost(l, \tau) \leibd \olcdf{g}(l),
\end{align*}
where $\olcdf{g}$ denotes the CDF of the group outcome likelihood $\olrv{g}$.
Integrating by parts, we obtain a simple expression for $\scostg{g}(\tau)$:
\begin{align*}
    \scostg{g}(\tau)
    &= \int_{0}^\tau \olcost(l, \tau) \leibd \olcdf{g}(l) \\
	&= [\olcost(l, \tau)\olcdf{g}(l)]^{\tau}_0 - \int_{0}^\tau \frac{\partial
        \olcost(l, \tau)}{\partial l}\olcdf{g}(l) \leibd l \\
    &= - \int_{0}^\tau \frac{\partial \olcost(l, \tau)}{\partial l}
    \olcdf{g}(l) \leibd l \, ,
\end{align*}
where the last line follows because $\olcost(\tau, \tau) = 0$ and $\olcdf{g}(0)
=
0$. This expression for $\scostg{g}(\tau)$ allows us to conveniently write the
social
gap as
\begin{align*}
    \sgap(\tau) 
    = \scostg{b}(\tau) - \scostg{a}(\tau) 
    = \int_{0}^\tau \frac{\partial \olcost(l, \tau)}{\partial l} (\olcdf{a}(l)
    - \olcdf{b}(l)) \leibd l \, .
\end{align*}
We now argue the social gap $\sgap(\tau)$ is positive. By the monotonicity
assumptions, $\frac{\partial \olcost(l, \tau)}{\partial l} < 0$ for $l \in (0,
\tau)$. Since group $b$ is disadvantaged in features, $\olcdf{a}(l) -
\olcdf{b}(l) < 0$ for $l \in (0, 1)$. Therefore, $\sgap(\tau) > 0$.

Now, we show $\sgap(\tau)$ is increasing in $\tau$.  Let $0 \leq \tau < \tau'
\leq 1$. Then, the difference in the social gap is given by
\begin{align*}
    \sgap(\tau') - \sgap(\tau)
    &= \int_0^{\tau} \frac{\partial \left(\olcost(l, \tau') -
    \olcost(l, \tau)\right)}{\partial l} (\olcdf{a}(l)
    - \olcdf{b}(l)) \leibd l \\
    & + \int_\tau^{\tau'} \frac{\partial \olcost(l, \tau')}{\partial l}
(\olcdf{a}(l)
    - \olcdf{b}(l)) \leibd l.
\end{align*}
Since group $b$ is disadvantaged in features, $(\olcdf{a}(l)
- \olcdf{b}(l)) < 0$ for all $l$. By assumption,
$\frac{\partial \olcost(l, \tau)}{\partial l}$ 
is monotonically non-increasing in $\tau$, so the first term is non-negative.
Similarly, $\frac{\partial \olcost(l, \tau')}{\partial l} < 0$ by monotonicity,
so the second term is positive. Hence, $\sgap(\tau') - \sgap(\tau) > 0$, which
establishes $\sgap(\tau)$ is monotonically increasing in $\tau$.
\end{proof}
	
As a corollary, if the institution improves its utility beyond the
non-strategic
optimal classifier, then it also causes the social gap to increase.
	\begin{corollary}
Suppose group $b$ is disadvantaged in features compared to group $a$, and
$\frac{\partial \olcost(l, \tau)}{\partial l}$ is monotonically non-decreasing
in $\tau$. Let $\tau \in (0, 1]$ be a threshold and $\tau_0=0.5$ be the optimal
non-strategic threshold. If $\iutil(\tau) > \iutil(\tau_0)$, then $\sgap(\tau)
> \sgap(\tau_0)$.
	\end{corollary}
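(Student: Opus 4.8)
The plan is to derive the corollary directly from the two preceding theorems, using Theorem~\ref{thm:main_tradeoff} to locate $\tau$ relative to $\tau_0$ and Theorem~\ref{thm:fosd} to convert that location into the claimed gap inequality. The whole argument reduces to two observations: that the utility hypothesis forces $\tau > \tau_0$, and that the social gap $\sgap$ is strictly increasing on $(0,1]$.

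First I would show $\tau > \tau_0$. The opening paragraph of the proof of Theorem~\ref{thm:main_tradeoff} establishes that no threshold below the non-strategic optimum can improve on it, i.e., $\tau < \tau_0$ implies $\iutil(\tau) \le \iutil(\tau_0)$ (since any $x \notin \accn(\tau_0)$ has $\ol(x) < 0.5$ and removing it raises utility). Taking the contrapositive, the hypothesis $\iutil(\tau) > \iutil(\tau_0)$ rules out $\tau < \tau_0$; and $\tau = \tau_0$ is impossible, as it would force $\iutil(\tau) = \iutil(\tau_0)$. Hence $\tau > \tau_0$.

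Next I would invoke Theorem~\ref{thm:fosd}. Under the standing hypotheses that group $b$ is disadvantaged in features and that $\frac{\partial \olcost(l, \tau)}{\partial l}$ has the monotonicity property required by that theorem, Theorem~\ref{thm:fosd} gives that $\sgap$ is positive and monotonically increasing over $(0,1]$. Since $\tau_0 = 0.5$ lies in $(0,1]$ and we have just shown $\tau > \tau_0$, monotonicity yields $\sgap(\tau) > \sgap(\tau_0)$, which is the claim.

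I do not expect a genuinely hard step here; the only point requiring care is the first one. The directional claim $\tau > \tau_0$ cannot be read off from quasiconcavity alone—quasiconcavity only guarantees that the superlevel sets of $\iutil$ are intervals and does not by itself say on which side of $\tau_0$ a high-utility threshold must lie. It instead depends on the sharper fact, proved inside Theorem~\ref{thm:main_tradeoff}, that the non-strategic optimum dominates every smaller threshold. I would also flag that the strict conclusion $\sgap(\tau) > \sgap(\tau_0)$ rests on the increase in the gap being strict, which is exactly what Theorem~\ref{thm:fosd} delivers from the strict feature-disadvantage assumption $\olcdf{b}(l) > \olcdf{a}(l)$ on $(0,1)$.
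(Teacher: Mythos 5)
Your proof is correct and follows exactly the same two-step route as the paper's own proof: Theorem~\ref{thm:main_tradeoff} forces $\tau > \tau_0$, and Theorem~\ref{thm:fosd} then gives the strict increase of $\sgap$. Your added justification of the first step (that the non-strategic optimum dominates every smaller threshold) is a faithful unpacking of what the paper leaves implicit.
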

	\begin{proof}
By Theorem \ref{thm:main_tradeoff}, if $\iutil(\tau) > \iutil(\tau_0)$, then
$\tau > \tau_0$. By Theorem \ref{thm:fosd}, if $\tau > \tau_0$, then
$\sgap(\tau) > \sgap(\tau_0)$.
	\end{proof}
	
\subsection{Different Costs}
\label{sec:differing-costs}
In Section \ref{sec:diff-feats}, we showed that when two subpopulations have
different feature distributions, the social burden can disproportionately fall
on one group. In this section, we show that even if the feature distributions
of the two groups are exactly identical, the social burden can still
disproportionately impact one group.

We have thus far assumed the existence of a cost function $c$
that is uniform across groups $a$ and $b$. For a variety of structural reasons,
it is unlikely this assumption holds in practice. Rather, it is often the case
that \emph{different groups experience different costs} for changing their
features.

When the cost for group $b$ is systematically higher than the cost for group
$a$, we prove group $b$ incurs higher social burden than group $a$.
Furthermore, if the institution improves its utility by increasing its threshold
$\tau$, then as a side effect it also increases the social gap between group $b$
and $a$ (Theorem \ref{thm:cost-gap}). 

Much of the prior work on fairness in classification focuses on preventing
unfairness that can arise when different subpopulations have different
distributions over
features and
labels \cite{dwork2012fairness,hardt2016equality,chouldechova2017fair}.
Our
result provides a reason to be concerned about the unfair impacts of
a classifier even when two
groups have identical initial distributions. Namely, that \emph{it can be
easier for one group to game the classifier than another}.

Formally, we say that group $b$ is disadvantaged in cost compared to group
$a$ if the following condition holds.
\begin{definition}[Disadvantaged in cost]
Let $c_g(x, x')$ be the cost for an individual from group $g$ to adapt their
    features from $x$ to $x'$. Group $b$ is disadvantaged in cost if $\bcost(x, x') = \kappa \acost(x, x')$ for all $x, x' \in X$ and some scalar $\kappa > 1$.
\end{definition}

Next, we give a variety of example scenarios of when a group can be
disadvantaged in cost.

\begin{example}[Opportunity Costs]
Many universities have adopted gender-neutral policies that stop the
``tenure-clock'' for a year for family-related reasons, e.g. childbirth.
Ostensibly, no research is expected while the clock is stopped.  However, the
adoption of gender-neutral clocks actually \textit{increased} the gap between
the percentage of men and women who received tenure \cite{antecol2016equal}. The
suggested cause is that women still shoulder more of the burden of bearing and
caring for children, compared to men. Men who stop their tenure clock are more
productive during the period than women, who have a higher opportunity cost to
doing research while raising a child.
\end{example}

\begin{example}[Information Asymmetry]
A large portion of high-achieving, low-income students do
not apply to selective colleges, despite the fact that
these colleges are typically \emph{less} expensive for them 
because of the financial
aid they would receive \cite{hoxby2012missing}. This
phenomenon seems to be due to low-income students having less access
to information about college \cite{hoxby2013expanding}. 
Since low-income students have more barriers to gaining
information about college, it is natural to assume that, compared to their 
wealthier peers, they have a higher cost to strategically manipulating their
admission features.
\end{example}

\begin{example}[Economic Differences]
Consider a social media company that wishes to classify individuals as
``influencers,'' either to more widely disseminate their content or to identify
promising accounts for online marketing campaigns. Wealthy individuals can
purchase followers or likes, whereas other groups have to increase
these numbers organically \cite{chafkin_2016}. Consequently, the costs to
increasing one's popularity metric differs based on access to capital.
\end{example}

Finally, our main technical result shows that even when the distributions of
groups $a$ and $b$ are identical, if group $b$ is disadvantaged in cost, then
when the institution increases its threshold for acceptance, it also increases
the social gap between the two groups.

\begin{theorem}
	\label{thm:cost-gap}
    Suppose positive individuals from groups $a$ and $b$ have the same
distribution over features, i.e, if $Z = \left(X \mid Y = 1\right)$, then $Z$ is
independent of the group $G$. If group $b$ is disadvantaged in cost compared
    to group $a$, then the social gap $\sgap(\tau)$ is non-negative and
    monotonically non-decreasing in the threshold $\tau$.
\end{theorem}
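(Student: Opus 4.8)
The plan is to combine the two hypotheses---identical feature distributions and multiplicatively scaled costs---to reduce the social gap to a scalar multiple of a single group's social burden, and then invoke the monotonicity already established in Theorem~\ref{thm:main_tradeoff}.

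First I would track how the cost scaling propagates through the individual burden. Since group $b$ is disadvantaged in cost, $c_b(x, x') = \kappa\, c_a(x, x')$ for a constant $\kappa > 1$, and because $\kappa$ is a positive scalar it factors out of the minimization defining the burden:
\[
\min_{\ol(x') \geq \tau} c_b(x, x') = \kappa \min_{\ol(x') \geq \tau} c_a(x, x').
\]
Thus the individual burden of group $b$ is exactly $\kappa$ times that of group $a$, pointwise in $x$. Next I would pass from individual to social burden using the distributional assumption. Because $Z = (X \mid Y = 1)$ is independent of $G$, the conditional laws of $X \mid Y = 1, G = a$ and $X \mid Y = 1, G = b$ coincide, so taking expectations of the pointwise identity over this common law gives $\scostg{b}(\tau) = \kappa\, \scostg{a}(\tau)$, and therefore
\[
\sgap(\tau) = \scostg{b}(\tau) - \scostg{a}(\tau) = (\kappa - 1)\, \scostg{a}(\tau).
\]

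Both claims then follow at once. Non-negativity holds since costs are non-negative, so $\scostg{a}(\tau) \geq 0$, and $\kappa - 1 > 0$. Monotonicity follows because $\scostg{a}(\tau)$ is exactly the social burden of a single population under the common distribution, which Theorem~\ref{thm:main_tradeoff} shows is monotonically non-decreasing in $\tau$; scaling by the positive constant $\kappa - 1$ preserves this. I expect no substantial obstacle: the only step needing care is confirming that the positive scalar $\kappa$ genuinely commutes with the minimization---which it does, since scaling by a positive constant preserves minimizers---after which the identical-distribution assumption collapses the gap to a clean scalar multiple of a monotone quantity.
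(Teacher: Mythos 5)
Your proposal is correct and follows essentially the same route as the paper: factor $\kappa$ out of the minimization defining the burden, use the identical conditional distribution of $X \mid Y=1$ to conclude $\scostg{b}(\tau) = \kappa\,\scostg{a}(\tau)$, and hence $\sgap(\tau) = (\kappa-1)\scostg{a}(\tau)$, with non-negativity and monotonicity inherited from the single-group social burden. The only cosmetic difference is that you make explicit the step that a positive scalar commutes with the $\min$, which the paper leaves implicit.
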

\begin{proof}
Since $X \mid Y = 1$ is independent of $G$, the social burden to a group $g$ can be written as
\begin{align*}
\scostg{g}(\tau) = \int_{\X} \min_{x':f_\tau(x')=1} c_g(x, x')p(X = x
\mid Y = 1) \leibd x
\end{align*}
where $f_\tau$ is the outcome likelihood classifier with threshold $\tau$. The social gap can then be
expressed as
\begin{align*}
    \sgap(\tau) & = \scostg{b}(\tau) - \scostg{a}(\tau) \\
    & = \int_{\X}
 (\kappa - 1) \min_{x':f_\tau(x')=1} c_a(x, x') ~p(X = x \mid Y = 1) \leibd x \, \\
& = (\kappa - 1) \scostg{a}(\tau) .
\end{align*}
Since the group social burden $\scostg{a}(\tau)$ is non-negative and monotonically non-decreasing, 
the social gap $\sgap(\tau)$ is also non-negative and monotonically non-decreasing.
\end{proof}

    \section{Case Study: FICO Credit Data}
\label{sec:experiments}
We illustrate the impact of strategic classification on different subpopulations
in the context of credit scoring and lending. FICO scores are widely used in the
United States to predict credit worthiness. The scores themselves are derived
from a proprietary classifier that uses features that are susceptible to gaming
and strategic manipulation, for instance the number of open bank accounts.

We use a sample of 301,536 FICO scores derived from TransUnion TransRisk scores
\citep{fed2007report} and preprocessed by \citet{hardt2016equality}. The scores
$X$ are normalized to lie between 0 and 100. An individual's outcome is labeled
as a
\emph{default} if she failed to pay a debt for at least 90 days on at least one
account in the ensuing 18-24 month period. Default events are labeled with
$Y=0$, and otherwise repayment is denoted with $Y=1$. The two subpopulations are
given by race: $a=\text{\tt white}$ and $b=\text{\tt black}$. 

We assume the credit lending institution accepts individuals based on a
threshold on the FICO score. Using the normalized scale, a threshold of
$\tau=58$ is typically used to determine eligibility for prime rate loans
\cite{hardt2016equality}. Our results thus far have used thresholds on the
outcome likelihood, rather than a score. However, as shown in
Figure~\ref{fig:repay_probs}, the outcome likelihood is monotonic in the FICO
score. Therefore, all our conditions and results can be validated using the
score instead of the outcome likelihood.

\subsection{Different Feature Distributions}
In Section~\ref{sec:diff-feats}, we studied the scenario where the distribution
of outcome likelihoods $\ol(X) = \prob{Y = 1 \mid X}$ differed across
subpopulations. In particular, if the likelihoods of the positive individuals in
group $B$ tend to be lower than the positive individuals in group $A$, then
increasing strategic robustness increases the social gap between $A$ and $B$. 

Interestingly, such a skew in score distributions exists in the FICO
data. Black borrowers who repay their loans tend to have lower FICO scores than
white borrowers who repay their loans. In terms of the corresponding score CDFs,
for every score $x$, $\olcdf{\tt black}(x) \geq \olcdf{\tt white}(x)$.
Figure~\ref{fig:feat_disadvantage} demonstrates this observation.

\begin{figure}
    \centering
    \includegraphics[width=\columnwidth]{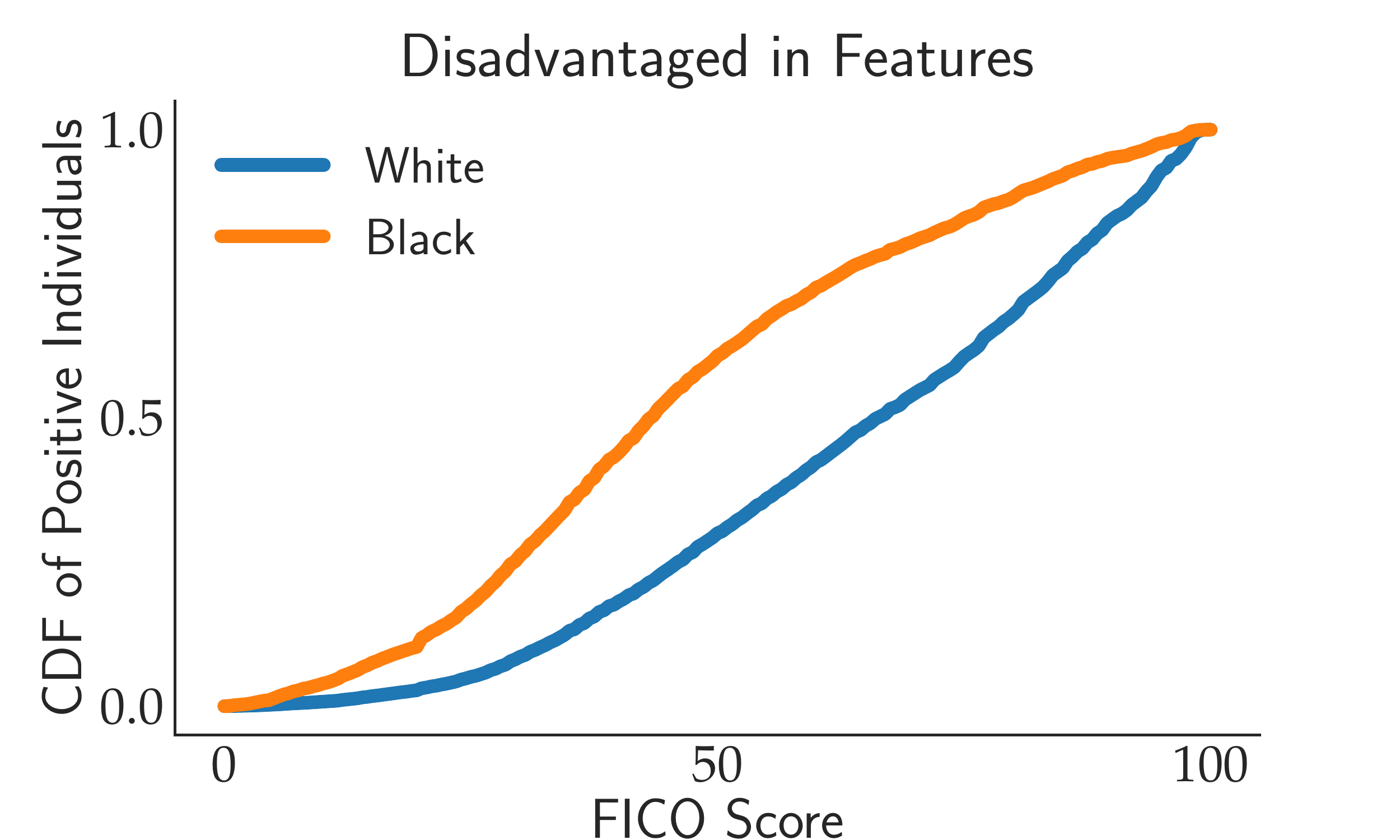}
    \caption{
        Comparison of the distribution of FICO scores among black and white
        borrowers who repaid their loans. Credit-worthy black individuals tend
        to have lower credit scores than credit-worthy white individuals. The
        comparison of the corresponding CDFs demonstrates our 
        ``disadvantaged in features'' assumption holds.}
    \label{fig:feat_disadvantage}
\end{figure}

When the score distribution among positive individuals is skewed,
Theorem~\ref{thm:fosd} guarantees the social gap between groups is increasing in
the threshold under a reasonable cost model.  Operationally, raising the loan
threshold to protect against strategic behavior increases the relative burden on
the black subgroup. To demonstrate this empirically, we use a coarse linear cost
model, $c(x, x') = \max(\alpha(x'- x), 0)$ for some $\alpha > 0$. Here, $\alpha$
corresponds to the cost of raising one's FICO score one point.
Since the probability of repayment $\prob{Y=1\mid x}$ is monotonically increasing in $x$,
the linear cost $c$ satisfies the requisite outcome monotonicity conditions.  

\begin{figure}
    \centering
    \includegraphics[width=\columnwidth]{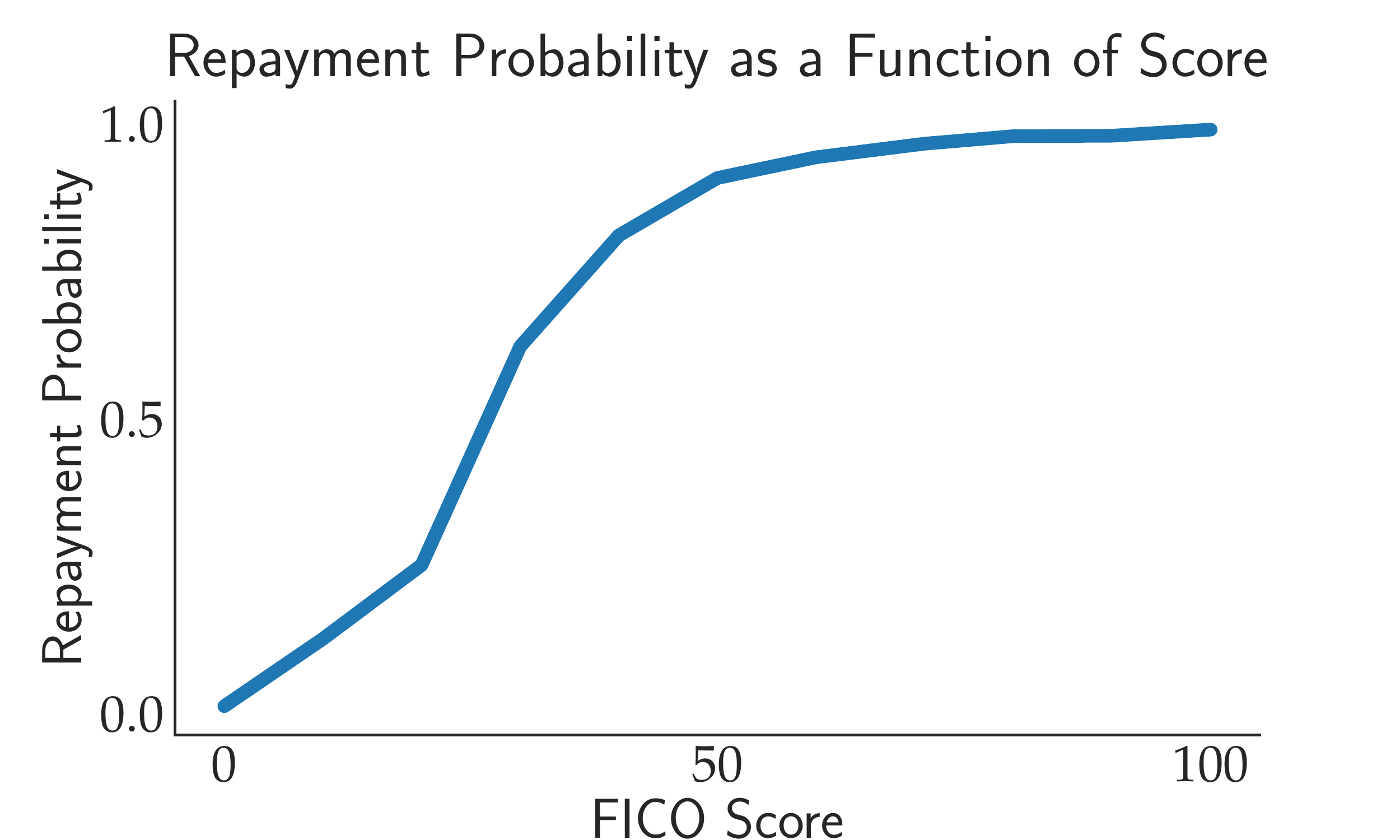}
    \caption{
        Repayment probability as a function of credit score. Crucially, the
probability of repayment $\prob{Y=1 \mid x}$ is monotonically increasing in
        $x$.}
    \label{fig:repay_probs}
\end{figure}

In Figure~\ref{fig:social_gap_curves_same_costs}, we compute $\sgap(\tau)$ as
$\tau$ varies from $0$ to $100$ for a range of different value of $\alpha$. For
any $\alpha$, the social utility gap is increasing in $\tau$.  Moreover, as
$\alpha$ becomes large, the rate of increase in the social gap grows large as
well.

\begin{figure}
    \centering
    \includegraphics[width=\columnwidth]{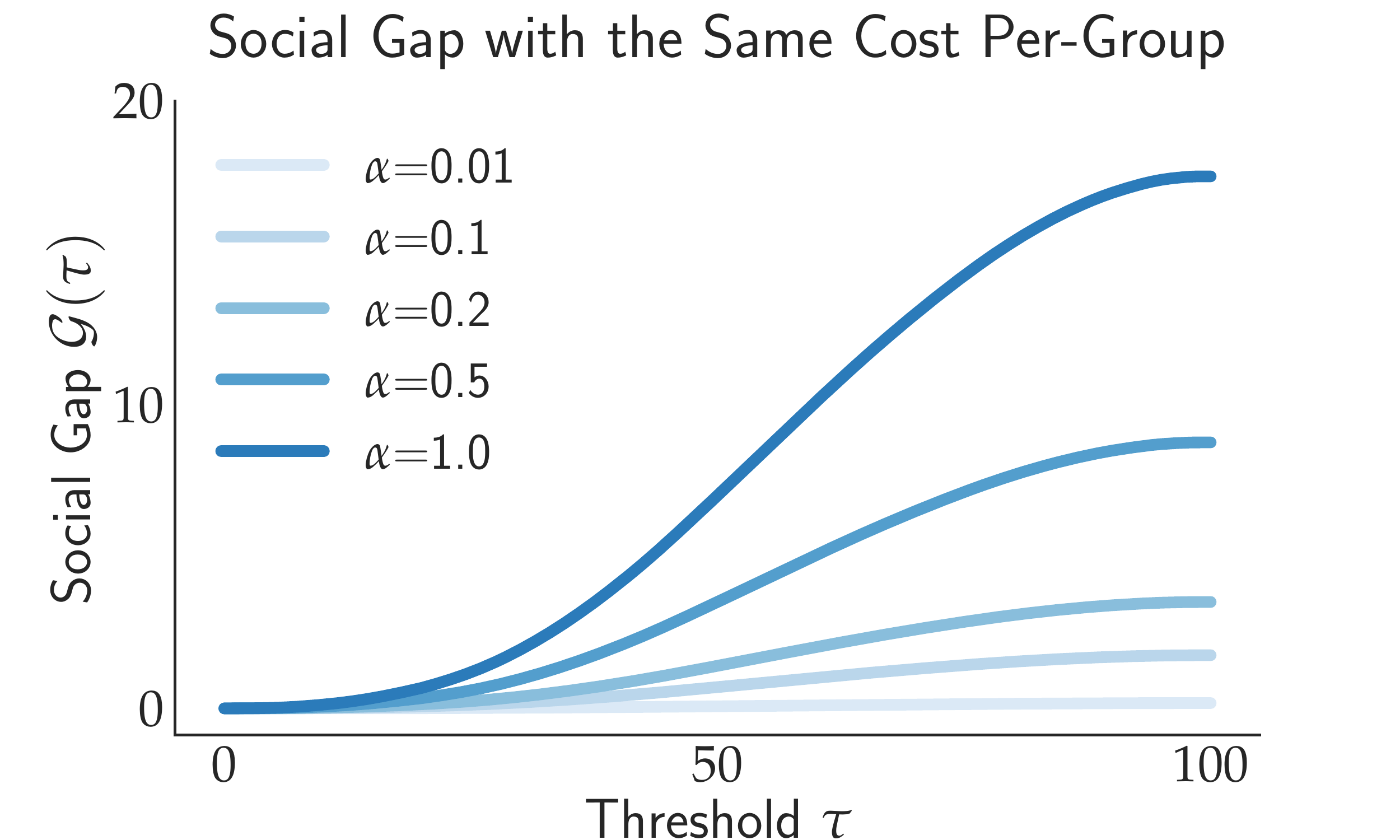}
    \caption{
        Impact of increasing the threshold $\tau$ on white and black credit
        applicants. When the cost to changing one's score $\alpha$ is small,
        increases to the threshold have only a small effect on the social gap.
        However, as $\alpha$ becomes large, even small increases to the
        threshold can create large discrepancies in social burden between the two
        groups.}
    \label{fig:social_gap_curves_same_costs}
\end{figure}

\subsection{Different Cost Functions}
In Section~\ref{sec:differing-costs}, we demonstrated when two subpopulations
are identically distributed, but incur different costs for changing their
features, there is a non-trivial social gap between the two. In the context of
the FICO scores, it is plausible that blacks are both disadvantaged in features
\emph{and} experience higher costs for changing their scores. For instance,
outstanding debt is an important component of FICO scores. One way to reduce
debt is to increase earnings. However, a persistent black-white wage gap between
the two subpopulations suggest increasing earnings is easier for group $a$ than
group $b$ \cite{daly2017disappointing}. This setting is not strictly captured
by
our existing results, and we should expect the effects of both different costs
functions and different feature distributions to compound and exacerbate the
unfair impacts of strategic classification.

To illustrate this phenomenon, we again use a coarse linear cost model.
Suppose group A has cost $c_A(x, x') = \max\{\alpha(x' - x), 0\}$ for
some $\alpha > 0$, and group B has cost $c_B(x, x') = \max\{\beta(x' - x), 0\}$
for some $\beta \geq \alpha$. As in Section~\ref{sec:differing-costs}, group B
is disadvantaged in cost provided the ratio $\kappa = \beta / \alpha > 1$.
In Figure~\ref{fig:social_gap_curves_diff_costs}, we show the social gap
$\sgap(\tau)$ for various settings of $\kappa$. The social gap is
always increasing as a function of $\tau$, and the rate of increase grows large
for even moderate values of $\kappa$. When $\kappa$
is large, even small increases in $\tau$ can disproportionately increase the
social burden for the disadvantaged subpopulation.

\begin{figure}
    \centering
\includegraphics[width=\columnwidth]{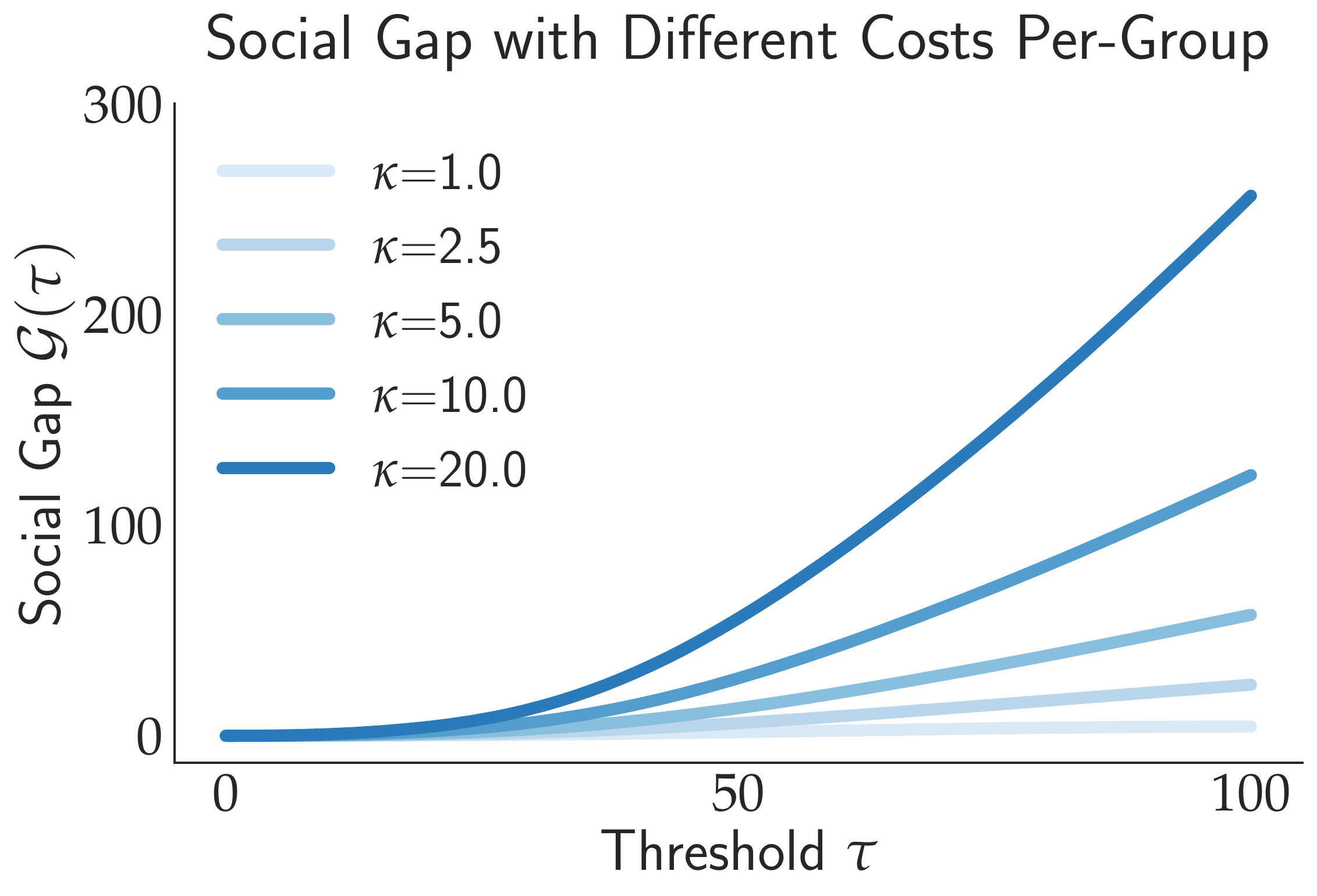}
    \caption{
        Impact of increasing the threshold $\tau$ on white and black credit
        applicants, under the assumption that both groups incur different costs
        for increasing their credit score. As the ratio between the costs
        $\kappa$ increases, the social cost gap grows rapidly
        between the two groups.}
    \label{fig:social_gap_curves_diff_costs}
\end{figure}

	\section{Related Work}
\label{sec:related_work}
\paragraph{Strategic Classification}
Prior work on strategic classification focuses solely on the institution,
primarily aiming to create high-utility solutions for the institution.
Our
work,
on
the
other
hand,
studies
the
tradeoff
between
the
institution's utility and the burden to the individuals being
classified.

~\citet{hardt2016strategic,
dong2018strategic, bruckner2011stackelberg} give algorithms to compute the
Stackelberg equilibrium, which corresponds to the extreme $\tau^{*}$ solution
in our trade-off curves. Although the Stackelberg equilibrium leads to maximal
institutional utility, we show that it also causes high social burden. We 
give several examples of when the high social burden induced by the Stackelberg
equilibrium makes it an undesirable solution for the institution.

Rather than the Stackelberg equilibrium, others have also considered finding
Nash equilibria of the game ~\citep{bruckner2012static, dalvi2004adversarial}.
 ~\citet{bruckner2012static} argue that since in practice people cannot
optimally respond to the
classifier, the Stackelberg solution tends to be too
conservative, and thus a Nash equilibrium strategy is preferable. 
Our
work
provides
a
complementary
reason
to
prefer
Nash
equilibria over the Stackelberg solution. Namely, for a broad class
of cost functions, any Nash equilibrium that is not equal to the
Stackelberg equilibrium places lower social burden on individuals.

Finally, we focus on the setting where individuals
are merely ``gaming'' their features, i.e., they do not
improve their true label by adapting their features. However, if the classifier
is able to incentivize strategic behavior that helps improve negative
individuals, then the social burden placed on positive individuals
may be considered acceptable.
~\citet{kleinberg2018classifiers} studies how to design classifiers that produce
such incentives.

\paragraph{Fairness}
Our work studies how \emph{strategic classification} results in
differing impacts
to different subpopulations and
is complementary to the large body of work studying the
differing impacts of \emph{classification}
\citep{executive2016big,barocas2016big}.

The prior work on classification is primarily concerned with preventing
unfairness
that
can arise due to subpopulations having differing distributions over features or
labels
\cite{hardt2016equality,dwork2012fairness,chouldechova2017fair}. We show
that in the
strategic
setting, a classifier can have differing impact due to the subpopulations
having
differing
distributions \emph{or} differing costs to adapting their features. Therefore,
when individuals are strategic, our work provides an additional reason to
be
concerned
about the fairness of a classifier. In particular, it can be easier for
one group to game the classifier than another.

Furthermore, we
show
that
if
the
institution
modifies
the
classifier
it
uses
to
be
more
robust to strategic behavior, then it also
as a side effect,
increases the gap between the
cost incurred by a disadvantaged subpopulation and an advantaged population.
Thus,
\emph{strategic classification can exacerbate unfairness in
classification.}

Our work is also complementary to \citet{liu2018delayed}, who also
analyze
how
the institution's utility trades-off with the impact to individuals. They study
the trade-off in the non-strategic setting and measure the impact of a
classifier using a dynamics model of how individuals are affected by the
classification
they
receive.
We
study
the
tradeoff in the strategic setting and
measure
the
impact of a
classifier by the cost of the strategic behavior induced by the classifier.

In concurrent work, \citet{hu2018disparate} also study negative externalities
of strategic classification.
In
their
model, they show that
the
Stackelberg equilibrium leads to only false
negative errors on a disadvantaged population and false positives on
the advantaged population. Furthermore, they show that providing a cost subsidy
for disadvantaged individuals can lead to worse outcomes for everyone.

	\section{Discussion of Social Burden} \label{sec:disc}
To measure the impact of strategic classification on the individuals
being classified, we introduced a measure of \emph{social burden}, defined
as
the
expected
cost
that
positive
individuals need to incur to be classified positively: $\scost(f) = \ex \left
[\min_{f(x') = 1} c(x, x') \mid y = 1 \right]$. An alternative measure one
might consider is the expected individual
utility for the positives: $\sutil(f) = \ex
\left
[
u_x(f, \Delta(x)) \mid y = 1
\right ]$, which we will denote the \emph{social utility}.

We prefer social burden to social utility because it makes fewer assumptions
about individual behavior. Social utility measures the utility of the individual
while assuming that they respond optimally and needs the assumption to hold to be a
meaningful measure. Social burden, on the other hand, applies
irrespective of the different policies individuals may actually act according
to. Our analysis assumes \emph{the institution} assumes individuals
respond optimally, but we ourselves believe this to be a strong assumption to hold in practice, and would like our measure of impact on individuals to apply regardless.

Moreover, most of our results are agnostic to the specific choice of social cost
measure.  The results in Section \ref{sec:inst_vs_social} about the tradeoff
between institutional utility and social burden all still hold. Specifically,
Theorem \ref{thm:main_tradeoff} holds with social utility instead of social
burden (and monotonically non-increasing instead of monotonically non-decreasing
since lower utility is worse). For our results in Section \ref{sec:subpop}, i.e,
Theorems \ref{thm:fosd} and \ref{thm:cost-gap}, there is still always a
non-negative social gap (now defined as the difference in social utilities
between the groups), but it is not necessarily true that the social gap
increases as the institution's threshold increases.

While both social burden and social utility apply only to positive individuals,
one could also use versions that integrate over all individuals:
$\mathcal{B}(f)
=
\ex \left
[\min_{f(x') = 1} c(x, x') \right]$ and $\mathcal{S}(f) = \ex
\left
[
u_x(f, \Delta(x))
\right ]$. Our results for $\scost(f)$ go through for $\mathcal{B}(f)$, and the
results for $\sutil(f)$ go through for $\mathcal{S}(f)$\footnote{For the results
in Section \ref{sec:diff-feats} the disadvantaged in features condition defined in Definition
\ref{def:feats} should be modified to no longer condition on $Y=1$.}.
However, in many cases giving a positive classification (e.g. a loan) to a negative
individual (someone who will default) can result in a long-term
negative impact to that individual \cite{liu2018delayed}. In general, it is
uncertain whether the reducing the costs incured by the negative individuals
confers positive social benefits, and we do not incorporate these
costs into our measure.

Overall, there are many potential measures that are complementary to our
measure
of
social burden, but they all provide a similar takeaway. Namely, that in
the strategic setting, there
is
a
tradeoff between institutional accuracy and
individual impact that
must
be
considered
when making choices about strategy-robustness.

	\section{Acknowledgements}
This material is based upon work supported by the National Science Foundation
Graduate Research Fellowship Program under Grant No. DGE 1752814. Any opinions,
findings, and conclusions or recommendations expressed in this material are
those of the author(s) and do not necessarily reflect the views of the National
Science Foundation.

	\bibliography{refs.bib}
	\bibliographystyle{ACM-Reference-Format}
	
	\newpage
\begin{appendices}
	\section{Proof of Lemma \ref{lem:nash}}
	\label{app:nash}
	\begin{proof}
The lemma follows by proving the following properties about the Nash equilibrium
strategies for the institution.
\begin{itemize}
    \item The Stackelberg threshold $\tau^*$ is a Nash equilibrium strategy.
\item All Nash equilibrium strategies lie in the interval $[\tau_0, \tau^*]$.
\item If $\tau_N$ is a Nash equilibrium strategy, then all $\tau \in [\tau_N,
    \tau^*]$ are also Nash equilibrium strategies.
\end{itemize}
Together, the three properties imply that the set of institution equilibrium
strategies is $[\tau_N, \tau^*]$ for some $\tau_N \in [\tau_0, \tau^*]$.

Before proceeding, we first establish and recall a few definitions. Let
$\Delta_{\tau}(x)$ be the best response of individual $x$ to the threshold $
\tau$. Define the set of individuals accepted for threshold $\tau$ and
response $\br_\tau$ by $\accs{\br_\tau}(\tau) = \{x : f_\tau(\br_\tau(x)) =
1\}$. Recall the \emph{outcome likelihood} $\ol(x) = \prob{Y = 1 \mid
X=x}$. Define the \emph{strategic outcome likelihood} as
$\old{\tau}(x') = \prob{Y = 1 \mid \br_\tau(X) = x'}$. The outcome likelihood is the probability that an individual is positive given their true features, while the strategic outcome likelihood is the probability that an individual is positive given their gamed features.

For the pair $(\tau, \br_\tau)$ to be a Nash equilibrium, $\tau$ must be a best
response to the individual's best response $\br_\tau(x)$. With knowledge of the
individual's response, $x' = \Delta_{\tau}(x)$, the institution's best response
is to play a threshold $\tau'$ so that $x' \in \accs{\br_\tau}(\tau')$ iff
$\old{\tau}(x') \geq 0.5$. Therefore, to show $(\tau, \br_\tau)$
is a Nash equilibrium, we must show $\tau = \tau'$, i.e. $x' \in
\accs{\br_\tau}(\tau)$ iff $\old{\tau}(x') \geq 0.5$.

To verify the condition $x' \in \accs{\br_\tau}(\tau)$ iff $\old{\tau}(x') \geq
0.5$,
there are three cases to consider. 
\begin{enumerate}
\item If $\ell(x) > \tau$, then $x \in \accs{\br_\tau}(\tau)$, $\br_\tau(x) =
x$, and $\old{\tau}(x) = \ol(x)$. Therefore, it suffices to check $\ell(x) \geq
0.5$.
\item If $\ell(x) < \tau$ and $\olcost(\ell(x), \tau) > 1$, then $x \not\in
\accs{\br_\tau}(\tau)$ and $\br_\tau(x) = x$. In this case, it suffices to check
$\ell(x) < 0.5$.
\item If $\ell(x) < \tau$ and $\olcost(\ell(x), \tau) \leq 1$, then $\br_\tau(x)
\in \accs{\br_\tau}(\tau)$, but $x \neq \br_\tau(x)$, so we must directly
verify $\prob{Y=1 \mid \olcost(\ell(X), \tau) \leq 1, \ell(X) \leq \tau} \geq
0.5$.
\end{enumerate}

We now proceed to the proof.

First, we show the Stackelberg equilibrium $(\tau^{*}, \br_{\tau^{*}})$ is a
Nash equilibrium. The Stackelberg threshold $\tau\opt$ is the largest $\tau\opt$
such that $c_L(0.5, \tau^*) \leq 1$.  If $\ell(x) > \tau\opt$, by monotonicity,
$\ell(x) \geq 0.5$. If $\ell(x) < \tau\opt$ and $c_L(\ell(x), \tau\opt) > 1$,
then $\ell(x) < 0.5$ by definition of $\tau\opt$. Similarly, if $\ell(x) <
\tau\opt$ and $c_L(\ell(x), \tau\opt) \leq 1$, then $\ell(x) \geq 0.5$, so
trivially  $\prob{Y=1 \mid \olcost(\ell(X), \tau) \leq 1, \ell(X) \leq \tau}
\geq 0.5$.
Hence, $(\tau\opt, \br_{\tau\opt})$ is a Nash equilibrium.
	
Next, we show that all Nash strategies must lie in the interval $[\tau_0,
\tau^*]$.
\begin{enumerate}
\item Suppose $\tau < \tau_0 = 0.5$. For all $x$ such that $\ell(\br_\tau(x)) =
\tau$, $\ell(x) < 0.5$. Therefore, $\prob{Y=1 \mid \olcost(\ell(X), \tau) \leq
1, \ell(X) \leq \tau} < 0.5$, so $\tau$ cannot be a Nash equilibrium strategy
for the institution.
\item Suppose $\tau > \tau^*$. By definition, $\tau\opt$ is the largest $\tau$
such that $c_L(0.5, \tau) \leq 1$. Thus, if $\tau > \tau\opt$, there exists $x$
with $\ell(x) < \tau\opt$ and $\olcost(\ell(x), \tau) > 1$, but $\ell(x) \geq
0.5$. Hence, $\tau$ cannot be a Nash strategy.
\end{enumerate}
		
Finally, we show that if $\tau_N$ is a Nash equilibrium strategy, then so is
$\tau$ for any $\tau \in [\tau_N, \tau^*]$. We consider each of the three cases
in turn.
\begin{enumerate}
\item Suppose $\ell(x) > \tau$. Then $\ell(x) > \tau > \tau_N \geq \tau_0 =
0.5$.
\item Suppose $\ell(x) < \tau$ and $\olcost(\ol(x), \tau) > 1$. Since $\tau \leq
\tau\opt$, by monotonicity in the second argument, $\olcost(0.5, \tau) \leq \olcost(0.5, \tau\opt) \leq 1$. By monotonicity in the first argument, since $\olcost(0.5, \tau)$, if $\olcost(\ol(x), \tau) > 1$, then it must be the case that $\ell(x) < 0.5$.
\item 
Suppose $\ell(x) < \tau$ and $\olcost(\ol(x), \tau) \leq 1$.
Let $L(\tau) = \{l : c_L(l, \tau) \leq 1, l \leq \tau \}$ be the set of outcome likelihoods that game to the threshold $\tau$ and $l_\tau = \min_{l \in L_\tau} l$ be the minimum such outcome likelihood. The points that game under the thresholds $\tau_N$ and $\tau$ form the intervals $[l_{\tau_N}, \tau_N]$ and $[l_\tau, \tau]$, respectively. Since $l_{\tau_N} \leq l_\tau$ and $\tau_N \leq \tau$, we have that
\begin{align*}
& \prob{Y = 1 \mid c_L(\ol(X), \tau) \leq 1, \ol(X) \leq \tau} \\
& = \prob{Y = 1 \mid \ol(X) \in [l_{\tau}, \tau]} \\
& \geq \prob{Y = 1 \mid \ol(X) \in [l_{\tau_N}, \tau_N]} \\
& = \prob{Y = 1 \mid c_L(\ol(X), \tau_N) \leq 1,
\ol(X) \leq \tau_N} \geq 0.5
\end{align*}
where the last inequality holds because $t_N$ is a Nash strategy.
\end{enumerate}
Since each of the three cases are satisfied, any $\tau \in [\tau_N, \tau\opt]$
is a Nash strategy.

We have now demonstrated each of the three properties outlined at the
beginning, and the lemma follows.
\end{proof}
\end{appendices}

\end{document}